\def\eqref#1{equation~\ref{#1}}
\def\1{\bm{1}}
\DeclareMathAlphabet{\mathsfit}{\encodingdefault}{\sfdefault}{m}{sl}
\SetMathAlphabet{\mathsfit}{bold}{\encodingdefault}{\sfdefault}{bx}{n}
\newcommand{\Param}{\mathbf{\Theta}}
\newtheorem{theorem}{Theorem}
\newtheorem{lemma}{Lemma}
\title{Exploring Weight Balancing on Long-Tailed Recognition Problem}
\author{Naoya Hasegawa \& Issei Sato \\
The University of Tokyo\\
\texttt{\{hasegawa-naoya410, sato\}@g.ecc.u-tokyo.ac.jp} \\
}
\begin{document}

\maketitle

\begin{abstract}
Recognition problems in long-tailed data, in which the sample size per class is heavily skewed, have gained importance because the distribution of the sample size per class in a dataset is generally exponential unless the sample size is intentionally adjusted. Various methods have been devised to address these problems.
Recently, weight balancing, which combines well-known classical regularization techniques with two-stage training, has been proposed. Despite its simplicity, it is known for its high performance compared with existing methods devised in various ways.
However, there is a lack of understanding as to why this method is effective for long-tailed data. In this study, we analyze weight balancing by focusing on neural collapse and the cone effect at each training stage and found that it can be decomposed into an increase in Fisher's discriminant ratio of the feature extractor caused by weight decay and cross entropy loss and implicit logit adjustment caused by weight decay and class-balanced loss. Our analysis enables the training method to be further simplified by reducing the number of training stages to one while increasing accuracy. Code is available at \textcolor{blue}{\href{https://github.com/HN410/Exploring-Weight-Balancing-on-Long-Tailed-Recognition-Problem}{https://github.com/HN410/Exploring-Weight-Balancing-on-Long-Tailed-Recognition-Problem}}.

\end{abstract}

\section{Introduction}
Datasets with an equal number of samples per class, such as MNIST \citep{lecun_gradient-based_1998} and CIFAR100 \citep{krizhevsky_learning_2009}, are often used, when we evaluate classification models and training methods in machine learning.
However, it is empirically known that the size distribution in the real world often shows a type of exponential distribution called Pareto distribution \citep{reed_pareto_2001}, and the same is true for the number of per-class samples in classification problems \citep{li_webvision_2017, spain_measuring_2007}. Such distributions are called long-tailed data due to the shape of the distribution since some classes (head classes) are often sampled and many others (tail classes) are not sampled very often. Long-tailed recognition (LTR) is used to attempt to improve the accuracy of classification models on uniform distribution when training data shows such a distribution. 
There is a problem in LTR that the head classes have large sample size; thus, the output is biased toward them.
This reduces the overall and tail class accuracy because tail classes make up the majority \citep{zhang_deep_2021}.

Various methods have been developed for LTR, such as class-balanced loss (CB) \citep{cui_class-balanced_2019}, augmenting samples of tail classes \citep{wang_rsg_2021}, two-stage learning \citep{kang_decoupling_2020}, and enhancing feature extractors \citep{liu_inducing_2023,yang_inducing_2022}; see Appendix \ref{app:related_work} for more related research.
\citet{alshammari_long-_2022} proposed a simple method, called \textit{weight balancing (WB)}, that empirically outperforms previous complex state-of-the-art methods.
WB simply combines two classic techniques, weight decay (WD) \citep{touretzky_comparing_1989} and MaxNorm \citep{hinton_improving_2012}, with two-stage learning.
WD and MaxNorm is known to prevent overlearning \citep{hinton_connectionist_1989};
however, it is not known why WB significantly improves the LTR performance.
\paragraph{Contribution} In this work, we analyze the effectiveness of WB in LTR focusing on neural collapse (NC) \citep{papyan_prevalence_2020} and the cone effect \citep{liang_mind_2022}. We first decompose WB into five components: WD, MaxNorm, cross entropy (CE), CB, and two-stage learning. We then show that each component has the following useful properties.
\begin{itemize}
    \item 1st stage: WD and CE increase the Fisher's discriminant ratio (FDR) \citep{fisher_use_1936} of features.
    \begin{itemize}
        \item Degrade the inter-class cosine similarities (Theorem \ref{theory:cone_effect} in Sec.\,\ref{sec:4_wd_degrades_cosisim}).
        \item Decrease the scaling parameters of batch normalization (BN) (Sec.\,\ref{sec:4.4_decrease_scaling_of_bn}). This has a positive effect on feature training.
        \item Facilitate improvement of FDR as features pass through layers (Sec.\,\ref{sec:4_improve_fdr_by_themselves}).
    \end{itemize}
    \item 1st stage: WD increases the norms of features for tail classes (Sec.\,\ref{sec:4_wd_raise_norm}).
    \item 2nd stage: WD and CB perform implicit logit adjustment (LA) by making the norm of classifier's weights higher for tail classes. MaxNorm facilitates this effect. This stage does not work well for datasets with a small class number (Theorem \ref{theory:wb} in Sec.\,\ref{sec:4_implicit_la}). 
\end{itemize}
The above analysis is useful in the following points: 1. it provides a guideline for the design of learning in LTR; 2. our theorem offers an insight into how to suppress the cone effect \citep{liang_mind_2022} that negatively affects deep learning classification models; 3. WB can be further simplified by removing the second stage. Specifically, we only need to learn in LTR by WD, feature regularization (FR), and an equiangular tight frame (ETF) classifier for the linear layer to extract features that are more linearly separable and adjusting the norm of classifier's weights by LA after the training.

\section{Related Work}

\citet{papyan_prevalence_2020} investigated the set of phenomena that occurs when the terminal phase of training, which they termed NC. 
NC can be briefly described as follows. Feature vectors converge to their class means (NC1); the class means of feature vectors converge to a simplex ETF \citep{strohmer_grassmannian_2003} (NC2); the class means of feature vectors and the corresponding weights of the linear classifier converge in the same direction (NC3); and when making predictions, models converge to predict the class, the mean feature vector of which is closest to the feature in Euclidean distance (NC4).

\citet{papyan_prevalence_2020} claimed that NC leads to increased generalization accuracy and robustness to adversarial samples. 
The number of studies have been conducted on the conditions under which NC occurs \citep{han_neural_2022, ji_unconstrained_2022, lu_neural_2021, papyan_prevalence_2020}. \citet{rangamani_neural_2022} theoretically and empirically showed that the occurrence of NC needs WD.

There have also been studied on NC when the model is trained with imbalanced data. \citet{fang_exploring_2021} theoretically and experimentally proved ``minority collapse'' in which features corresponding to classes with small numbers of samples tend to converge in the same direction, even if they belong to different classes. \citet{yang_inducing_2022} attempted to solve this problem by fixing the weights of the linear layers to an ETF. \citet{thrampoulidis_imbalance_2022} tried to generalize NC to imbalanced data by demonstrating that the features converge towards simplex-encoded-labels interpolation (SELI), an extension of ETF in the terminal phase of training. They also demonstrated that minority collapse does not occur under the appropriate regularization including WD.

WD is a regularization method often used for deep neural networks (DNNs) with BN \citep{ioffe_batch_2015}, which is scale invariant. \citet{zhang_three_2019} revealed that WD increases the effective learning rate which facilitates regularization in such models. Training of networks containing BN layers with WD is often studied in terms of training dynamics \citep{li_exponential_2020, lobacheva_periodic_2021, wan_spherical_2021}. \citet{summers_four_2020} and \citet{kim_guidelines_2022} studied whether to apply WD to the scaling and shifting parameters of the BN for such networks. WD is also known to have many other positive effects on DNNs, e.g., smoothing loss landscape \citep{li_visualizing_2018, lyu_understanding_2022}, causing NC \citep{rangamani_neural_2022}, and making filters sparser \citep{mehta_implicit_2019} and low-ranked \citep{galanti_sgd_2023}. WD is also effective for imbalanced data \citep{alshammari_long-_2022,thrampoulidis_imbalance_2022}.

\citet{liang_mind_2022} found a phenomenon they termed ``cone effect'' observed in neural networks. The phenomenon is a tendency in which features from DNNs with activation functions are prone to have high cosine similarities. It is widely observed regardless of the modality of data, the structure of models, and whether the models are trained. It also indicates that features from different classes tend to exhibit high cosine similarity. Thus, it has a negative impact on DNNs for classification since features with lower inter-class cosine similarity are more linearly separated. We also reconfirmed such phenomena in Sec. \ref{sec:4_wd_degrades_cosisim}. Our analysis provides guidelines to prevent the effect.

\newcommand{\weightj}[1]{\mathbf{w}_{#1}}
\newcommand{\weightjhat}[1]{\hat{\mathbf{w}}_{#1}}
\newcommand{\weighti}{\weightj{i}}
\newcommand{\featurej}[1]{\bm{g}(\mathbf{x}_{#1})}
\newcommand{\featurei}{\featurej{i}}
\newcommand{\barfeaturej}[1]{\overline{\bm{g}}(\mathbf{x}_{#1})}
\newcommand{\barfeaturei}{\barfeaturej{i}}
\newcommand{\softmaxj}[2]{\frac{\exp(\weightj{#1}^\top\featurej{#2})}{\sum_{l=1}^C\exp\left(\weightj{l}^\top\featurej{#2}\right)}}
\newcommand{\softmaxjhat}[2]{\frac{\exp(\weightjhat{#1}^\top\featurej{#2})}{\sum_{l=1}^C\exp\left(\weightjhat{l}^\top\featurej{#2}\right)}}
\newcommand{\softmaxjstar}[2]{\frac{\exp(\weightj{#1}^{*\top}\featurej{#2})}{\sum_{l=1}^C\exp\left(\weightj{l}^{*\top}\featurej{#2}\right)}}
\newcommand{\softmaxi}{\softmaxj{y_i}{i}}
\newcommand{\rhofrac}[1]{\rho^{\frac{#1}{C-1}}}
\newcommand{\cbloss}{\ell_{\mathrm{CB}}}
\newcommand{\wbobj}{F_{\mathrm{WB}}}

\section{Preliminaries} \label{chap:Preliminaries}
This section defines the notations used in the paper and presents the strategies for LTR. See Appendix \ref{app:notation} for the table of these notations. 
Suppose a multiclass classification with $C$ classes by samples $\mathcal{X} \subset \mathbb{R}^p$ and labels $\mathcal{Y} \equiv \{1, 2, \ldots, C\}$. The training dataset $\mathcal{D} \equiv \{(\mathbf{x}_i, y_i) | \mathbf{x}_i \in \mathcal{X}, y_i \in \mathcal{Y}\}_{i=1}^N$ consists of $\mathcal{D}_k \equiv \{(\mathbf{x}_i, y_i) \in \mathcal{D} \mid y_i = k\}$.
We define $N$ as $\|\mathcal{D}\|$ and $N_k$ as $\|\mathcal{D}_k\|$, the number of samples. Without loss of generality, assume the classes are sorted in descending order by the number of samples. In other words, $\forall k \in \{1, 2, \ldots, C-1\},\ N_k \ge N_{k+1}$ holds. Imbalance factor $\rho = \frac{N_1}{N_C} = \frac{\max_k{N_k}}{\min_k{N_k}}$ indicates the extent to which the training dataset is imbalanced and $\rho \gg 1$ holds in LTR. Therefore, the number of samples in each class $N_k$ satisfies $N_k = N_{1}\rho^{-\frac{k-1}{C-1}}$. Define $\overline{N}$ as $C\left(\sum_{k=1}^C\left(\frac{1}{N_k}\right)\right)^{-1}$, the harmonic mean of the number of samples per class. In LTR, the test dataset $\mathcal{D}'$ used for accuracy evaluation is class-balanced, i.e., each class has the same number of samples. 

Consider a network $\bm{f}(\cdot; \Param): \mathcal{X} \to \mathbb{R}^C$ parameterized by $\Param = \{\bm{\theta}_l\}$. It outputs a logit $\mathbf{z}_i = \bm{f}(\mathbf{x}_i; \mathbf{\Theta})$. The network is further divided into a feature extractor $\bm{g}(\cdot; \Param_g): \mathcal{X}\to \mathbb{R}^d$ and a classifier $\bm{h}(\cdot; \Param_h): \mathbb{R}^d \to \mathbb{R}^C$ with $d$ as the number of dimensions for features. This means $\bm{f}(\mathbf{x}_i; \mathbf{\Theta}) = \bm{h}(\bm{g}(\mathbf{x}_i; \Param_g); \Param_h)$. We often abbreviate $\bm{g}(\mathbf{x}_i; \Param_g)$ to $\featurei$. Let $\boldsymbol{\mu}_k \equiv \frac{1}{N_{k}}\sum_{(\mathbf{x}_i, k) \in \mathcal{D}_k}\featurei$ be the inner-class mean of the features for class $k$ and $\boldsymbol{\mu} \equiv \frac{1}{C}\sum_{k=1}^C \boldsymbol{\mu}_k$ be the mean of the inner-class mean of the features. We used a linear layer for the classifier. In other words, $\bm{h}(\mathbf{v}; \Param_h) = \mathbf{W}^\top\mathbf{v}$ holds for a feature $\mathbf{v} \in \mathbb{R}^d$ with $\mathbf{W} = \mathbb{R}^{d\times C}$ as a weight matrix. Denote the $k$th column vector of $\mathbf{W}$ by $\mathbf{w}_k$; thus, $\Param_h = \{\mathbf{w}_k\}$. The loss function is denoted by $\ell(\mathbf{z}_i, y_i): \mathbb{R}^C \times \mathbb{R}^C \to \mathbb{R}$. Let $\ell_{\mathrm{CE}}$ and $\ell_{\mathrm{CB}}$ be the loss function of CE and CB, respectively. By rewriting $F(\Param; \mathcal{D})\equiv \frac{1}{N}\sum_{i=1}^N{\ell(\bm{f}(\mathbf{x}_i; \Param), y_i)}$, parameters $\Param$ are optimized as follows in the absence of regularization:
\begin{align}
\Param^* = \arg\min_{\Param}F(\Param; \mathcal{D}).
\end{align}

We evaluated the method by accuracy on test dataset and FDR. FDR is the ratio of the inter-class variance to the inner-class variance and indicates the ease of linear separation of the features. For example, the FDR of training features is $\operatorname{Tr}(S_W^{-1}S_B)$ where $S_B = \sum_{k = 1}^{C}N_k (\boldsymbol{\mu}_k - \boldsymbol{\mu})(\boldsymbol{\mu}_k - \boldsymbol{\mu})^\top$ and $S_W = \sum_{k = 1}^C\sum_{\mathbf{x}_i \in \mathcal{D}_k} (\mathbf{x}_i - \boldsymbol{\mu}_k)(\mathbf{x}_i - \boldsymbol{\mu}_k)^\top$. The FDR of test features is similar.

\paragraph{Regularization Methods}
We implemented WD as L2 regularization with a hyperparameter $\lambda$ because we used stochastic gradient descent (SGD) for the optimizer. Thus, optimization is written as $\Param^* = \arg\min_{\Param}F(\Param; \mathcal{D}) +  \frac{\lambda}{2}\sum_{k \in \mathcal{Y}}\|\weightj{k}\|_2^2$. MaxNorm is a regularization that places a restriction on the upper bound of the norm of weights. As with \citet{alshammari_long-_2022}, for the convenience of hyperparameter tuning, the weights to be constrained are only the those belonging to the classifier $\Param_h \subset \Param$. Thus, we compute $\Param^* = \arg\min_{\Param}F(\Param; \mathcal{D}) ,  \text{s.t.}\ \forall\ \mathbf{w}_k \in \Param_h,\ \|\mathbf{w}_k\|_2^2 \le \eta_k^2,$
where $\eta_k$ is a hyperparameter. Since constrained optimization is difficult to solve in neural networks, we implemented it using projected gradient descent. In our implementation, this projects the weights that do not satisfy the constraint to the range where the constraint is satisfied by updating $\mathbf{w}_k$ to $\min \left(1, \frac{\eta_k}{\|\mathbf{w}_k\|_2}\right) \mathbf{w}_k.$

\paragraph{Weight Balancing}
WB adopts two-stage training \citep{kang_decoupling_2020}. 
In the first stage, the parameters of the entire model $\Param$ are optimized with CE using WD.
In the second stage, $\Param_g$ is fixed and only $\Param_h$ is optimized with CB using WD and MaxNorm. 
For simplicity, let $\beta$ of CB be 1 in this paper. Thus, $\cbloss(\mathbf{z}_i, y_i)$ is $-\frac{\overline{N}}{N_{y_i}}\log\left(\frac{\exp((\mathbf{z}_i)_{y_i})}{\sum_{j=1}^C \exp((\mathbf{z}_i)_j)}\right)$.\footnote{ In the paper of \citet{cui_class-balanced_2019}, $\overline{N}$ is set to be $1$, but in their implementation, $\overline{N}$ is equal to the harmonic mean. We adopt the latter since our experiments were based on their implementation. }
Therefore, in the second stage of WB, $\wbobj(\mathbf{W}; \mathcal{D}) \equiv \frac{1}{N}\sum_{i=1}^{N}\cbloss(\mathbf{z}_i, y_i) +  \frac{\lambda}{2}\sum_{k \in \mathcal{Y}}\|\weightj{k}\|_2^2$ is optimized for $\mathbf{W}$. Note that we do not consider MaxNorm in this paper, as we reveal that it only changes the initial values and do not impose any intrinsic constraints on the optimization presented in Appendix. \ref{app:experiments}.

\paragraph{Logit Adjustment}
Multiplicative LA \citep{kim_adjusting_2020} changes the norm of the per-class weights of the linear layer. The weights of the linear layer corresponding to class $k$, $\mathbf{w}'_k$ are adjusted to $\frac{1}{{\mathbb{P}(Y = k)}^ \gamma}\frac{\mathbf{w}_k}{\|\mathbf{w}_k\|_2}$, where $\gamma > 0$ is a hyperparameter. \citet{kim_adjusting_2020} trained using projected gradient descent so that the weights of the linear layer always satisfy $\forall k,\ \|\mathbf{w}_k\|_2 = 1$, but in our study, for comparison, we trained with regular gradient descent, normalized the norm post-hoc, and then adjusted the norm. Additive LA \citep{menon_long-tail_2020} is a method to adjust the output to minimize the average per-class error by adding a different constant for each class to the logit at prediction. That is, the logit $\mathbf{z}_i$ at prediction is adjusted to $\mathbf{z}'_i$ as $
(\mathbf{z}'_i)_k = (\mathbf{z}_i)_k - \tau \log 
\mathbb{P}(Y= k),$ where $\tau$ is a hyperparameter. LA generally increases the probability of classifying inputs into tail classes.

\paragraph{ETF Classifier}
NC indicates that the matrix of classifier weights in deep learning models converges to an ETF when trained with CE \citep{papyan_prevalence_2020}. The idea of ETF classifiers is to train only the feature extractor by fixing the linear layer to an ETF from the initial step. ETF classifiers' weights $\mathbf{W} \in \mathbb{R}^{d\times C}$ satisfy $\mathbf{W} = \sqrt{E_W \frac{C}{C-1}} \mathbf{U}\left(\mathbf{I}_C - \frac{1}{C}\mathbf{1}_C\mathbf{1}_C^\top\right),$ where $\mathbf{U} \in \mathbb{R}^{d\times C}$ is a matrix such that $\mathbf{U}^\top\mathbf{U}$ is an identity matrix, $\mathbf{I}_C \in \mathbb{R}^{C\times C}$ is an identity matrix, and $\mathbf{1}_C$ is a $C$-dimensional vector, all elements in which are $1$. Also, $E_W$ is a hyperparameter, which was set to $1$ in our experiments.

\tabcolsep = 2pt

\begin{table}
\begin{center}
    \caption{FDRs for each dataset of models trained with each method.
    A higher FDR indicates that features are more easily linearly separable. For all datasets, the method with WD or CE produces a higher FDR and using both results in the highest FDR.}
    \label{tab:exp1_fdr}
    \begin{tabular}{lcccccc}
    \toprule
                & \multicolumn{2}{c}{CIFAR10-LT} & \multicolumn{2}{c}{CIFAR100-LT} & \multicolumn{2}{c}{mini-ImageNet-LT} \\ \cmidrule(l){2-7}
    Method      & Train & Test & Train & Test & Train & Test   \\ \cmidrule(r){1-1} \cmidrule(l){2-3} \cmidrule(l){4-5} \cmidrule(l){6-7}
    CE w/o WD & $8.17 \times 10^1$ &  $2.17 \times 10^1$ & $1.28 \times 10^2$ &  $4.16 \times 10^1$ & $7.56 \times 10^1$ &  $4.28 \times 10^1$ \\
    CB w/o WD & $4.43 \times 10^1$ &  $1.50 \times 10^1$ & $8.17 \times 10^1$ &  $2.42 \times 10^1$ & $4.68 \times 10^1$ &  $2.93 \times 10^1$ \\
    CE w/ WD    & $\mathbf{2.60 \times 10^3}$ &  $\mathbf{3.89 \times 10^1}$ & $\mathbf{2.87 \times 10^4}$ &  $\mathbf{1.07 \times 10^2}$ & $\mathbf{6.58 \times 10^2}$ &  $\mathbf{1.01 \times 10^2}$  \\
    CB w/ WD    & $3.39 \times 10^2$ &  $3.04 \times 10^1$ & $2.12 \times 10^4$ &  $6.74 \times 10^1$ & $4.84 \times 10^2$ &  $6.84 \times 10^1$ \\ \midrule
    WD w/o BN   & $2.19 \times 10^2$ &  $3.42 \times 10^1$ & $5.77 \times 10^2$ &  $7.95 \times 10^1$ & $1.61 \times 10^2$ &  $6.73 \times 10^1$ \\ 
    WD fixed BN & $\mathbf{1.63 \times 10^3}$ &  $\mathbf{4.16 \times 10^1}$ & $\mathbf{2.04 \times 10^4}$ &  $\mathbf{1.05 \times 10^2}$ & $\mathbf{3.94 \times 10^2}$ &  $\mathbf{1.04 \times 10^2}$ \\
    \bottomrule
    \end{tabular}
\end{center}
\end{table}

\section{Role of first training stage in WB}\label{chap:4_first_stage}
First, we analyze the effect of WD and CE in the first training stage. In Sec. \ref{sec:4_Settings}, we present the datasets and models used in the analysis. Then, we examine the cosine similarity and FDR of the features from the models trained with each training method for the first stage of training in Secs. \ref{sec:4_wd_degrades_cosisim}, \ref{sec:4.4_decrease_scaling_of_bn}, and \ref{sec:4_improve_fdr_by_themselves}. In Sec. \ref{sec:4_wd_raise_norm}, we identify the properties of training features trained with WD, which is the key to the success of WB. The results of the experiments that is not included in the main text are presented in Appendix \ref{app:experiments}.
\subsection{Settings}\label{sec:4_Settings}
We used CIFAR10, CIFAR100 \citep{krizhevsky_learning_2009}, mini-ImageNet \citep{vinyals_matching_2016}, and ImageNet \citep{deng_imagenet_2009} as the datasets and followed \citet{cui_class-balanced_2019}, \citet{vigneswaran_feature_2021}, and \citet{liu_large-scale_2019} to create long-tailed datasets, CIFAR10-LT, CIFAR100-LT, mini-ImageNet-LT, and ImageNet-LT. We also experimented with a tabular dataset, Helena \citep{guyon_analysis_2019}, to verify the effectiveness of our method for other than image dataset. Each class is classified into one of three groups, \textit{Many}, \textit{Medium}, and \textit{Few}, depending on the number of training samples $N_k$.

We used ResNeXt50 \citep{xie_aggregated_2017} for ImageNet-LT and ResNet34 \citep{he_deep_2016} for the other datasets.
In WB, parameters were trained in two stages \citep{kang_decoupling_2020};
in the first stage, we trained the weights of the entire model $\Param$, while in the second stage, we trained the weights of the classifiers $\Param_h$ with the weights of the feature extractors $\Param_g$ fixed. 

To investigate the properties of neural network models, we also experimented with models using a multilayer perceptron (MLP) and a residual block (ResBlock) \citep{he_deep_2016}. The number of blocks is added after the model name to distinguish them (e.g., MLP3). We trained and evaluated these models using MNIST \citep{lecun_gradient-based_1998}.

For training losses, we used CE unless otherwise stated. See Appendix \ref{app:settings} for detailed settings.

\begin{figure}
    \centering
    \begin{minipage}[b]{0.49\linewidth}
    \includegraphics[width=1.0\linewidth]{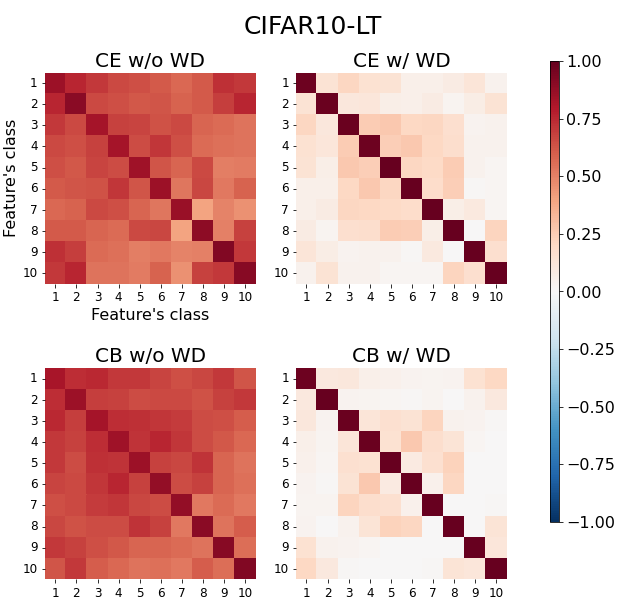}
    \end{minipage}
    \begin{minipage}[b]{0.49\linewidth}
    \includegraphics[width=1.0\linewidth]{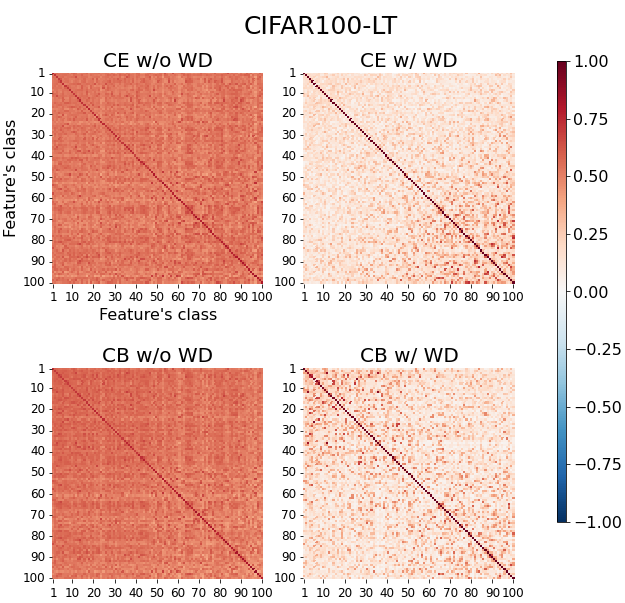}
    \end{minipage}
    \caption{Heatmaps showing average cosine similarities of training features between two classes. WD maintains high cosine similarity between the same classes and reduces the cosine similarity between the different classes. }
    \label{fig:exp1_cos}
\end{figure}

\subsection{WD and CE degrade inter-class cosine similarities}\label{sec:4_wd_degrades_cosisim}

We first examined the role of WD in the first stage of training. We trained the model in two ways: without WD (w/o WD) and with WD (w/ WD).
In addition, we compared two types of loss: one using CE and the other using CB, for a total of four training methods.
We investigated the FDR, the inter-class and inner-class mean cosine similarity of the features, and the norm of the mean features per class. Due to space limitations, some of results of the mini-ImageNet-LT excluding FDR are presented in Appendix \ref{app:experiments}.

Table \ref{tab:exp1_fdr} lists the FDRs of the models trained with each method, and Figure \ref{fig:exp1_cos} shows the cosine similarities of the training features per class. It can be seen that the combination of WD and CE achieves the highest FDR. This fact is consistent with the conclusion that WD is necessary for the NC in balanced datasets, as shown by \citet{rangamani_neural_2022} and suggests that this may also hold for long-tailed data. 

When WD is not applied, the cosine similarity is generally higher even between features of different classes. This is natural because neural networks have the cone effect found in \citet{liang_mind_2022}. 
The cone effect is a phenomenon in which features from a DNN tend to have high cosine similarities to each other even if they belong to different classes. However, the methods with WD result in lower cosine similarities between features of different classes and higher cosine similarities between features of the same classes, leading to a higher FDR. This shows that WD prevents the cone effect, as supported by the following theorem.

\begin{theorem}\label{theory:cone_effect}
    For all $(\mathbf{x}_i, y_i), (\mathbf{x}_j, y_j) \in \mathcal{D}$ s.t. $y_i \neq y_j$, if $\mathbf{W}$ is an ETF and there exists $\epsilon$ and $L$ s.t. $\left\|\frac{\partial \mathcal{\ell_{\mathrm{CE}}}}{\partial \featurei}\right\|_2, \left\|\frac{\partial \mathcal{\ell_{\mathrm{CE}}}}{\partial \featurej{j}}\right\|_2 \le \epsilon < \frac{1}{C}$ and $\|\featurei\|_2, \|\featurej{j}\|_2 \le L \le 2\sqrt{2}\log{(C-1)}$, the following holds:
    \begin{align}
        \mathrm{cos}\left(\featurei, \featurej{j}\right) \le 2\delta\sqrt{1-\delta^2},
    \end{align}
    where $\delta \equiv \frac{1}{L}\frac{C-1}{C}\log\left(\frac{(C-1)(1-\epsilon)}{\epsilon}\right) \in \left(\frac{1}{\sqrt{2}}, 1\right]$ and $\cos(\cdot, \cdot)$ means cosine similarity of the two vectors.
\end{theorem}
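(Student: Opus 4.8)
The plan is to prove this in two movements. First, I would convert the gradient-norm hypothesis into the quantitative statement that each feature $\featurei$ is tightly aligned in direction with its \emph{own} class weight $\weightj{y_i}$. Second, I would combine this with the fact that the columns of an ETF are pairwise obtuse and close the argument with the triangle inequality for angular distance on the unit sphere. Throughout I use the three identities that follow from $\mathbf{W}$ being an ETF with $E_W=1$: $\|\weightj{k}\|_2=1$ for every $k$, $\weightj{k}^\top\weightj{\ell}=-\tfrac{1}{C-1}$ for $k\neq\ell$, and $\sum_{k=1}^C\weightj{k}=\mathbf{0}$.

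\emph{Step 1 (alignment).} Writing $p_\ell$ for the softmax probability of class $\ell$ at $\featurei$, one has $\partial\ell_{\mathrm{CE}}/\partial\featurei=\sum_\ell p_\ell\weightj{\ell}-\weightj{y_i}$. Taking the inner product of this vector with $\weightj{y_i}$ and simplifying with the ETF identities collapses it to $-\tfrac{C}{C-1}(1-p_{y_i})$, so Cauchy--Schwarz together with $\|\weightj{y_i}\|_2=1$ and $\|\partial\ell_{\mathrm{CE}}/\partial\featurei\|_2\le\epsilon$ give $p_{y_i}\ge 1-\tfrac{C-1}{C}\epsilon\ge 1-\epsilon$. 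I then rewrite $p_{y_i}\ge 1-\epsilon$ as $\sum_{\ell\neq y_i}\exp\!\big((\weightj{\ell}-\weightj{y_i})^\top\featurei\big)\le\tfrac{\epsilon}{1-\epsilon}$, apply AM--GM to the $C-1$ summands, and use $\sum_\ell\weightj{\ell}=\mathbf{0}$ (so that $\sum_{\ell\neq y_i}(\weightj{\ell}-\weightj{y_i})^\top\featurei=-C\,\weightj{y_i}^\top\featurei$); solving for $\weightj{y_i}^\top\featurei$ yields $\weightj{y_i}^\top\featurei\ge\tfrac{C-1}{C}\log\tfrac{(C-1)(1-\epsilon)}{\epsilon}$, a positive quantity because $\epsilon<1/C$. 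Dividing by $\|\featurei\|_2\le L$ gives $\cos(\featurei,\weightj{y_i})\ge\delta$, and symmetrically $\cos(\featurej{j},\weightj{y_j})\ge\delta$; a direct estimate using $\epsilon<1/C$ and $L\le 2\sqrt{2}\log(C-1)$ places $\delta$ in $(\tfrac{1}{\sqrt{2}},1]$ (the upper bound is automatic, since $\delta$ lower-bounds a cosine).

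\emph{Step 2 (conclusion).} The map $\angle(\mathbf{a},\mathbf{b}):=\arccos(\cos(\mathbf{a},\mathbf{b}))$ is the geodesic metric on directions, so $\angle(\featurei,\featurej{j})\ge\angle(\weightj{y_i},\weightj{y_j})-\angle(\featurei,\weightj{y_i})-\angle(\featurej{j},\weightj{y_j})$. Since $y_i\neq y_j$, $\angle(\weightj{y_i},\weightj{y_j})=\arccos(-\tfrac{1}{C-1})\ge\tfrac{\pi}{2}$, while Step 1 bounds each subtracted angle by $\arccos\delta<\tfrac{\pi}{4}$; hence $\angle(\featurei,\featurej{j})\ge\tfrac{\pi}{2}-2\arccos\delta\ge0$. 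Because $\cos$ is decreasing on $[0,\pi]$,
\[
\cos(\featurei,\featurej{j})\ \le\ \cos\!\big(\tfrac{\pi}{2}-2\arccos\delta\big)=\sin(2\arccos\delta)=2\delta\sqrt{1-\delta^2},
\]
the last equalities being the double-angle identity together with $\sin(\arccos\delta)=\sqrt{1-\delta^2}$.

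\emph{Main obstacle.} The heart of the proof is Step 1, and the delicate part is obtaining the $\log\tfrac{(C-1)(1-\epsilon)}{\epsilon}$ rate rather than the weaker $\log\tfrac{1-\epsilon}{\epsilon}$: the gain of the $(C-1)$ factor relies on applying AM--GM to all $C-1$ off-target exponentials and on the vanishing-sum identity $\sum_\ell\weightj{\ell}=\mathbf{0}$, instead of crudely bounding a single summand by the whole sum. A secondary bookkeeping point is tracking the regime hypotheses tightly enough to keep $\delta\in(\tfrac{1}{\sqrt{2}},1]$ --- this is exactly what makes the $\arccos$/monotonicity step in Step 2 non-vacuous, and it is where the bound $L\le 2\sqrt{2}\log(C-1)$ is used.
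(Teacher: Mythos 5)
Your proof is correct and follows essentially the same route as the paper's: bound $1-p_{y}$ by the gradient norm via the inner product with $\weightj{y}$ and the ETF identities, convert $p_y\ge 1-\epsilon$ into the lower bound $\weightj{y}^\top\barfeaturej{}\ge\delta$ (your AM--GM over the $C-1$ off-target exponentials is the same inequality as the paper's Jensen step on the softmax denominator), and finish with the angular triangle inequality and monotonicity of cosine. Your closing computation $\cos(\pi/2-2\arccos\delta)=2\delta\sqrt{1-\delta^2}$ is a marginally cleaner packaging of the paper's $\cos(\alpha-2\beta)$ expansion, but the argument is the same.
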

This theorem states that the cone effect is suppressed when the following two conditions hold: 1. the weight matrix of the linear layer is an ETF; 2. the norms of the features are sufficiently small.
Sufficient training causes NC and makes the weight matrix of the linear layer converge to an ETF \citep{papyan_prevalence_2020}. ETF Classifiers are also effective: thery are methods to fix the weight matrix of the linear layer to an ETF from the beginning to satisfy this condition. As for the second condition, WD contributes to satisfy the condition. In fact, as shown in Figure \ref{fig:exp1_norm}, WD also indirectly degrades the norm of the features by regularizing the weights causing low inter-class cosine similarity. This also suggests that explicit FR leads to better feature extractor training. We consider that in Sec. \ref{sec:4.3_proposed_method}.

\subsection{WD and CE decrease scaling parameters of BN} \label{sec:4.4_decrease_scaling_of_bn}
Next, we split the effect of WD on the model into that on the convolution layers and that on the BN layers.  
To examine the impact on convolution, we analyzed the FDR of the features from the models trained with WD applied only to the convolution layers (WD w/o BN).
The bottom halves of Table \ref{tab:exp1_fdr} shows that the FDRs of the methods with restricted WD application. These results indicate that when WD is applied only to the convolution layers, the FDR improves sufficiently, which means that enabling WD for the convolution layers is essential for improving accuracy. One reason for this may be an increase in the effective learning rate \citep{zhang_three_2019}. 

\paragraph{Small scaling parameters of BN facilitates feature learning} We also examined the effect on BN layers and found that WD reduces the mean of the BN's scaling parameters; see Tables \ref{tab:exp_bn_stat1} and \ref{tab:exp_bn_stat2}. Note that the standard deviation of the shifting parameters remains almost unchanged. We investigated how the FDR changes when models are trained with the scaling parameters of BN fixed to one common small value (WD fixed BN). In this method, we applied WD only to the convolution layers and fixed the shifting parameters of BN to $0$. We selected the optimal value for the scaling parameters from $\{0.01, 0.02, \ldots, 0.20\}$ using the validation dataset. Table \ref{tab:exp1_fdr} shows that a small value of the scaling parameters of BN is crucial for boosting FDR; even setting the same value for all scaling parameters in the entire model also works. In this case, the scaling parameters of BN does not directly affect FDR, as it only multiplies the features uniformly by a constant. This suggests that the improvement in FDR caused by applying WD to BN layers is mainly because smaller scaling parameters have a positive effect on the learning dynamics and improve FDR. Although \citet{kim_guidelines_2022} attribute this to the increase in the effective learning rate of the scaling parameters, they have a positive effect on FDR even when the scaling parameters are not trained, suggesting that there are other significant effects. For example, smaller scaling parameters reduce the norm of the feature, promoting the effects discussed in Sec. \ref{sec:4_wd_degrades_cosisim}. We also examined the effect of high relative variance in shifting parameters of BN in Appeindix \ref{app:experiments}. 

\begin{table}
\begin{center}
    \caption{Means and standard deviations of all scaling parmeters in BN layers of models trained with each method for each dataset. WD greatly reduces the average scaling parameter.}
    \label{tab:exp_bn_stat1}
    \begin{tabular}{lccc}
    \toprule        
    Method    & CIFAR10-LT & CIFAR100-LT & mini-ImageNet-LT   \\ \cmidrule(r){1-1} \cmidrule(l){2-4} 
    CE w/o WD & $ 1.00 \pm0.0218$ &  $1.00\pm0.0289$ & $0.998\pm0.0498$ \\
    CE w/ WD & $0.0395 \pm 0.0273$ &  $0.0649\pm0.0548$ & $0.0863\pm0.0707$  \\

    \bottomrule
    \end{tabular}
\end{center}
\end{table}

\begin{table}
\begin{center}
    \caption{Means and standard deviations of all shifting parmeters in BN layers of models trained with each method for each dataset.}
    \label{tab:exp_bn_stat2}
    \begin{tabular}{lccc}
    \toprule        
    Method    & CIFAR10-LT & CIFAR100-LT & mini-ImageNet-LT   \\ \cmidrule(r){1-1} \cmidrule(l){2-4} 
    CE w/o WD & $ -0.0236\pm0.0255$ &  $-0.0444\pm0.0348$ & $ -0.0904\pm0.0698$ \\
    CE w/ WD & $-0.00310\pm0.0244$ &  $ -0.00667\pm0.0309$ & $-0.0186\pm0.0448$  \\

    \bottomrule
    \end{tabular}
\end{center}
\end{table}

\begin{table}
    \centering
    \caption{FDRs of features from each model trained with each method and features passed through randomly initialized linear layer and ReLU after model. C10, C100, and mIm are the abbreviations for CIFAR10, CIFAR100, and mini-ImageNet respectively. Red text indicates higher values than before, blue text indicates lower values. The FDR of features obtained from models trained with WD improves by passing the features through a random initialized layer and ReLU.}
    \label{tab:exp12_random}
    \begin{tabular}{lccccccc}
    \toprule
    & & \multicolumn{2}{c}{CE} & \multicolumn{2}{c}{WD w/o BN} & \multicolumn{2}{c}{WD} \\\cmidrule(rl){3-4} \cmidrule(rl){5-6} \cmidrule(l){7-8}
    Model & Dataset          & before & after & before & after & before & after   \\ \cmidrule(r){1-2} \cmidrule(rl){3-3} \cmidrule(rl){4-4} \cmidrule(rl){5-5} \cmidrule(rl){6-6} \cmidrule(rl){7-7} \cmidrule(l){8-8}
     & C100         & $7.51 \times 10^1$ &  \textcolor{blue}{$7.11 \times 10^1$}& $2.29 \times 10^2$ &  \textcolor{red}{$2.34 \times 10^2$}& $3.98 \times 10^2$ &  \textcolor{red}{$4.10 \times 10^2$}  \\
     & C100-LT & $4.16 \times 10^1$ &  \textcolor{blue}{$4.02 \times 10^1$}&  $7.95 \times 10^1$ &  \textcolor{blue}{$7.89 \times 10^1$}& $1.07 \times 10^2$ &  \textcolor{red}{$1.13 \times 10^2$} \\
     & C10 & $4.77 \times 10^1$ &  \textcolor{red}{$5.56 \times 10^1$}& $7.96 \times 10^1$ &  \textcolor{red}{$1.02 \times 10^2$}& $1.81 \times 10^2$ &  \textcolor{red}{$2.35 \times 10^2$} \\
     & C10-LT & $2.17 \times 10^1$ &  \textcolor{red}{$2.36 \times 10^1$}& $3.42 \times 10^1$ &  \textcolor{red}{$3.94 \times 10^1$}& $3.89 \times 10^1$ &  \textcolor{red}{$4.30 \times 10^1$} \\
      & mIm & $7.34 \times 10^1$ &  \textcolor{blue}{$6.66 \times 10^1$}&  $1.81 \times 10^2$ &  \textcolor{red}{$1.86 \times 10^2$}&  $3.63 \times 10^2$ &  \textcolor{red}{$3.93 \times 10^2$} \\
     \multirow{-6}{*}{ResNet34} & mIm-LT & $4.28 \times 10^1$ &  \textcolor{blue}{$3.97 \times 10^1$}&  $6.73 \times 10^1$ &  \textcolor{blue}{$6.30 \times 10^1$}& $1.01 \times 10^2$ &  \textcolor{red}{$1.08 \times 10^2$} \\
    \midrule
    MLP3 & & $1.58 \times 10^2$ &  \textcolor{blue}{$1.54 \times 10^2$}&  $2.48 \times 10^2$ &  \textcolor{red}{$3.53 \times 10^2$}& $2.36 \times 10^2$ &  \textcolor{red}{$7.96 \times 10^2$} \\
    MLP4 & & $1.98 \times 10^2$ &  \textcolor{blue}{$1.96 \times 10^2$}& $3.60 \times 10^2$ &  \textcolor{red}{$4.97 \times 10^2$}&  $4.12 \times 10^2$ &  \textcolor{red}{$1.43 \times 10^3$} \\
    MLP5 & & $2.44 \times 10^2$ &  \textcolor{blue}{$2.37 \times 10^2$}& $4.91 \times 10^2$ &  \textcolor{red}{$6.76 \times 10^2$}& $6.10 \times 10^2$ &  \textcolor{red}{$2.47 \times 10^3$} \\
    ResBlock1 & & $1.39 \times 10^2$ &  \textcolor{blue}{$1.36 \times 10^2$}& $2.20 \times 10^2$ &  \textcolor{red}{$2.69 \times 10^2$}& $2.44 \times 10^2$ &  \textcolor{red}{$5.81 \times 10^2$}\\
    ResBlock2 & \multirow{-5}{*}{MNIST} & $1.66 \times 10^2$ &  \textcolor{blue}{$1.59 \times 10^2$}& $2.97 \times 10^2$ &  \textcolor{red}{$3.65 \times 10^2$}& $4.44 \times 10^2$ &  \textcolor{red}{$9.55 \times 10^2$}\\
    \bottomrule
    \end{tabular}
\end{table}

\subsection{WD and CE facilitate improvement of FDR as features pass through layers}\label{sec:4_improve_fdr_by_themselves}
Table \ref{tab:exp12_random} compares the FDR of two sets of features; ones from each model trained with each method (before) and ones output from the model, passed through a randomly initialized linear layer, then ReLU-applied (after). This experiment revealed that the features obtained from the models trained with WD have a bias to improve their FDR regardless of the class imbalance. The detailed experimental set-up is described in Appendix \ref{app:experiments}. While training without WD does not increase the FDR in most cases, the features from models with WD-applied convolution improve the FDR over the before FDR in most cases. Moreover, the after-FDR increases significantly when WD is applied to the whole model. Note that this phenomenon is valid when the features go through only one randomized layer. Experiments have also shown that passing features through multiple layers of randomly initialized linear layers and ReLUs does not necessarily improve FDR; see Appendix \ref{app:experiments}. These results mean that only increasing randomized layers does not enhance FDR. However, features trained with WD are less likely to be degraded even when going through one randomized layer. This suggests that WD and CE facilitate the training of models.

\subsection{WD increases norms of features for tail classes}\label{sec:4_wd_raise_norm}
We also investigated the properties appearing in the norm of the training features from each model investigated in Sec. \ref{sec:4_wd_degrades_cosisim}; Sec. \ref{sec:4_implicit_la} reveals that this property is crucial to the effectiveness of WB. Figure \ref{fig:exp1_norm} shows the norm of the per-class mean training features. Note that this phenomenon does not occur with test features. The method without WD shows almost no relationship between the number of samples and the norm of the features for each class. However, when WD is applied, the norms of the \textit{Many}-class features drop significantly, to between one-half and one-fifth that of  \textit{Few} classes. 
While this phenomenon has been observed in step-imbalanced data \citep{thrampoulidis_imbalance_2022}, 
we found that it also occurs in long-tailed data.

\begin{figure}
\begin{center}
\begin{minipage}[b]{0.48\linewidth}
\includegraphics[width=1.0\linewidth]{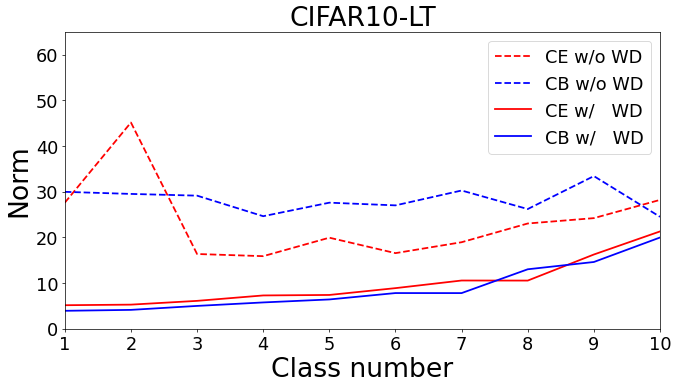}
\end{minipage}
\begin{minipage}[b]{0.48\linewidth}
\includegraphics[width=1.0\linewidth]{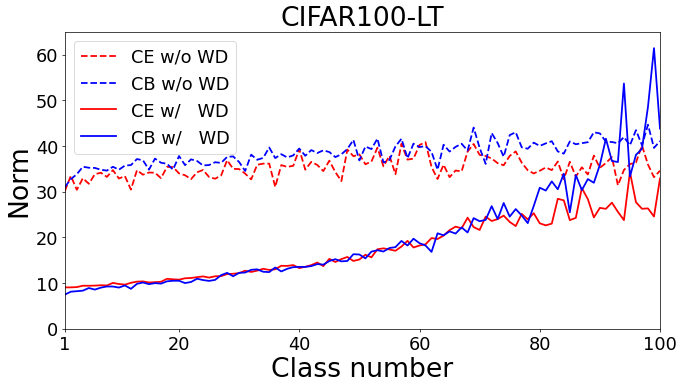}
\end{minipage}
\caption{Norm of mean per-class training features produced from models trained with each method. Features learned with methods with WD all demonstrate that the norms of the \textit{Many} classes' features tend to be smaller than those of the \textit{Few} classes.}
\label{fig:exp1_norm}
\end{center}
\end{figure}

\section{Role of second training stage in WB} \label{sec:4.2_second_stage}
 In this section, we theoretically analyze how the second stage of training operates in Sec. \ref{sec:4_implicit_la}, which leads to further simplification and empirical performance improvement of WB in Sec. \ref{sec:4.3_proposed_method}.
 
\subsection{WD and CB perform implicit logit adjustment}\label{sec:4_implicit_la}
We found that WB encourages the model to output features with a high FDR in the first stage of training. How does the second stage of training improve accuracy? \citet{alshammari_long-_2022} found that WB trains models so that the norm of the linear layer becomes more significant for the tail classes in the second stage of training but does not explicitly train in this way. We found the following theorem that shows the second stage of WB is equivalent to multiplicative LA under certain assumptions. In the following theorem, we assume $\boldsymbol{\mu} = \mathbf{0}$, which is valid when NC occurs, i.e., when per-class mean features follow ETF or SELI \citep{papyan_prevalence_2020, thrampoulidis_imbalance_2022}. See Appendix. \ref{app:proof} for the theorem and proof when this does not hold.

\begin{theorem}\label{theory:wb}
Assume $\boldsymbol{\mu} = \mathbf{0}$. For any $k \in \mathcal{Y}$, if there exists $\weightj{k}^*$ s.t. $\left. \frac{\partial \wbobj}{\partial \weightj{k}}\right|_{\weightj{k} = \weightj{k}^*} = \mathbf{0}$ and $\|\weightj{k}^*\|_2 = O\left(\frac{1}{\lambda \rho C}\right)$,  we have
\begin{align}
    \forall k \in \mathcal{Y}, \left\|\weightj{k}^* - \frac{\overline{N}}{\lambda N} \boldsymbol{\mu}_k\right\|_2 = O\left(\frac{1}{\lambda^2 \rho^2 C^2}\right),
\end{align}
\end{theorem}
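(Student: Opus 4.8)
The plan is to pin down the stationary point directly from the first-order condition $\partial\wbobj/\partial\weightj{k}=\mathbf{0}$ and to linearize the softmax around the uniform distribution, which is legitimate precisely because the hypothesis $\|\weightj{k}^{*}\|_2\le O\!\left(\frac{1}{\lambda\rho C}\right)$ forces every logit to be tiny.

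First I would differentiate. Writing $\mathbf{z}_i^{*}=\mathbf{W}^{*\top}\featurei$ and letting $p_k(\mathbf{z})=\exp((\mathbf{z})_k)/\sum_l\exp((\mathbf{z})_l)$ be the softmax, the routine softmax--cross-entropy computation, carrying the per-sample CB weight $\overline{N}/N_{y_i}$, gives
\begin{align}
\frac{\partial\wbobj}{\partial\weightj{k}}=\frac{1}{N}\sum_{i=1}^{N}\frac{\overline{N}}{N_{y_i}}\bigl(p_k(\mathbf{z}_i^{*})-\mathbb{1}[y_i=k]\bigr)\featurei+\lambda\weightj{k},
\end{align}
so stationarity reads $\lambda\weightj{k}^{*}=\frac{1}{N}\sum_{i}\frac{\overline{N}}{N_{y_i}}(\mathbb{1}[y_i=k]-p_k(\mathbf{z}_i^{*}))\featurei$. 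Summing these conditions over $k\in\mathcal{Y}$ and using $\sum_k(\mathbb{1}[y_i=k]-p_k(\mathbf{z}_i^{*}))=0$ shows $\sum_k\weightj{k}^{*}=\mathbf{0}$, which I use to simplify the expansion.

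The core step is the linearization. Let $G\equiv\max_i\|\featurei\|_2<\infty$ (the extractor is frozen after stage one). By hypothesis every logit obeys $|(\mathbf{z}_i^{*})_k|\le\epsilon\equiv G\max_j\|\weightj{j}^{*}\|_2=O\!\left(\frac{1}{\lambda\rho C}\right)$, so a Taylor expansion of the softmax at $\mathbf{0}$ gives $p_k(\mathbf{z}_i^{*})=\tfrac1C+\tfrac1C\weightj{k}^{*\top}\featurei+r_{ik}$ with $|r_{ik}|=O(\epsilon^{2}/C)$ (the linear term reduces to $\tfrac1C\weightj{k}^{*\top}\featurei$ because $\sum_j\weightj{j}^{*}=\mathbf{0}$ makes the mean logit vanish). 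Substituting the leading value $p_k=\tfrac1C$ into the stationarity identity, splitting into $y_i=k$ and $y_i\neq k$, and using $\boldsymbol{\mu}=\mathbf{0}$ (so that $\sum_{l\neq k}\boldsymbol{\mu}_l=-\boldsymbol{\mu}_k$), the per-class contributions telescope:
\begin{align}
&\frac{1}{N}\sum_{i:\,y_i=k}\frac{\overline{N}}{N_k}\Bigl(1-\tfrac1C\Bigr)\featurei-\frac{1}{N}\sum_{i:\,y_i\neq k}\frac{\overline{N}}{N_{y_i}}\frac{1}{C}\featurei \nonumber\\
&\qquad=\frac{\overline{N}(C-1)}{NC}\boldsymbol{\mu}_k+\frac{\overline{N}}{NC}\boldsymbol{\mu}_k=\frac{\overline{N}}{N}\boldsymbol{\mu}_k .
\end{align}
Hence $\lambda\weightj{k}^{*}=\frac{\overline{N}}{N}\boldsymbol{\mu}_k+\mathbf{e}_k$, where $\mathbf{e}_k=-\frac1N\sum_i\frac{\overline{N}}{N_{y_i}}\bigl(p_k(\mathbf{z}_i^{*})-\tfrac1C\bigr)\featurei$; this already displays the multiplicative-LA form $\weightj{k}^{*}\approx\frac{\overline{N}}{\lambda N}\boldsymbol{\mu}_k$.

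It remains to bound $\mathbf{e}_k$. Substituting the expansion, $\mathbf{e}_k=-\frac1C\mathbf{M}\weightj{k}^{*}+\mathbf{r}_k$, where $\mathbf{M}\equiv\frac1N\sum_i\frac{\overline{N}}{N_{y_i}}\featurei\featurei^{\top}$ is PSD with $\|\mathbf{M}\|_{\mathrm{op}}\le G^{2}\frac{C\overline{N}}{N}$ and $\mathbf{r}_k$ is of lower order ($O(\epsilon^{2})$). The crucial point is that the hypothesis feeds back on itself: from the rearranged stationarity relation $(\lambda I+\tfrac1C\mathbf{M})\weightj{k}^{*}=\frac{\overline{N}}{N}\boldsymbol{\mu}_k+\mathbf{r}_k$, the bound $\|\weightj{k}^{*}\|_2\le O\!\left(\frac{1}{\lambda\rho C}\right)$, and $\tfrac1C\|\mathbf{M}\|_{\mathrm{op}}\le G^2\tfrac{\overline N}{N}=O(1)$, one deduces $\frac{\overline{N}}{N}\|\boldsymbol{\mu}_k\|_2=O\!\left(\frac{1}{\rho C}\right)$; since the per-class feature means are bounded below in norm when $\boldsymbol{\mu}=\mathbf{0}$ (they form a scaled ETF under NC), this forces $\frac{C\overline{N}}{N}=O\!\left(\frac1\rho\right)$, hence $\|\mathbf{M}\|_{\mathrm{op}}=O\!\left(\frac1\rho\right)$. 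Plugging these back, $\|\mathbf{e}_k\|_2\le\frac1C\|\mathbf{M}\|_{\mathrm{op}}\|\weightj{k}^{*}\|_2+\|\mathbf{r}_k\|_2=O\!\left(\frac{1}{\lambda\rho^{2}C^{2}}\right)$, and dividing by $\lambda$ yields the claim. I expect this error accounting to be the main obstacle: one must track the $1/C$ from the softmax linearization, the $1/(\lambda\rho C)$ from the hypothesis, the feature-norm bounds, and the imbalance quantities all at once, and in particular run the bootstrap (re-using the hypothesis to pull the extra factor of $\rho$ out of $\frac{C\overline N}{N}$) carefully enough to land on exactly the $\rho^{-2}C^{-2}$ rate. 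Dropping the assumption $\boldsymbol{\mu}=\mathbf{0}$ leaves extra $\boldsymbol{\mu}$-dependent terms in the telescoping step of the third display, which is what forces the separate, more delicate statement deferred to the appendix.
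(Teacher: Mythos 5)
Your main line of argument is sound and rests on the same two quantitative ingredients as the paper's proof: (i) the small-norm hypothesis forces every logit to be $O\!\left(\frac{1}{\lambda\rho C}\right)$, so the softmax is $\frac{1}{C}+O\!\left(\frac{1}{\lambda\rho C^2}\right)$, and (ii) the ratio $\frac{\overline{N}}{N}$ is $O\!\left(\frac{1}{\rho C}\right)$, which converts the per-sample softmax error into the stated $\rho^{-2}C^{-2}$ rate. The packaging differs: you solve the stationarity equation at $\weightj{k}^{*}$ directly and isolate $\lambda\weightj{k}^{*}=\frac{\overline{N}}{N}\boldsymbol{\mu}_k+\mathbf{e}_k$, whereas the paper evaluates the gradient at the candidate point $\weightjhat{k}=\frac{\overline{N}}{\lambda N}\boldsymbol{\mu}_k$, shows it is $O\!\left(\frac{1}{\lambda\rho^2C^2}\right)$ (plus a $\frac{\overline{N}}{N}\boldsymbol{\mu}$ term that vanishes under $\boldsymbol{\mu}=\mathbf{0}$), and then subtracts the two gradients so that the difference is exactly $\lambda(\weightjhat{k}-\weightj{k}^{*})$ plus a softmax-difference term of the same order. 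The two are essentially equivalent; your telescoping computation of the leading term and your accounting of $\mathbf{e}_k$ both check out.

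The one genuine weak point is how you obtain $\frac{C\overline{N}}{N}=O\!\left(\frac{1}{\rho}\right)$. You bootstrap it out of the stationarity relation by additionally assuming that the per-class means $\boldsymbol{\mu}_k$ are bounded below in norm (``scaled ETF under NC''). That assumption is not part of the theorem statement, and without it your deduction $\frac{\overline{N}}{N}\|\boldsymbol{\mu}_k\|_2=O\!\left(\frac{1}{\rho C}\right)\Rightarrow\frac{C\overline{N}}{N}=O\!\left(\frac{1}{\rho}\right)$ collapses, leaving only the trivial bound $\frac{\overline{N}}{N}\le\frac{C}{\rho}$, which loses a factor of $C^2$ in the final rate. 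The fact you need is available for free: the paper's setup fixes the exponential class-size profile $N_k=N_1\rho^{-\frac{k-1}{C-1}}$, from which a direct computation (the paper's Lemma on $\frac{\overline{N}}{N}$) gives $\frac{\overline{N}}{N}=\frac{\rho C\left(\rho^{1/(C-1)}-1\right)^2}{\left(\rho^{C/(C-1)}-1\right)^2}=O\!\left(\frac{1}{\rho C}\right)$ with no reference to the features or the optimizer. Replace the bootstrap with that computation and your proof is complete and assumption-clean.
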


where $O$ is a Bachmann-Landau O-notation. For example, the notation in the form $O\left(\frac{1}{x}\right)$ indicates that the term is at most a constant multiple of $\frac{1}{x}$ if $x$ is sufficiently large. Note that $\frac{\overline{N}}{\lambda N}$ is independent of $k$. This theorem states that if the number of classes or the imbalance factor is sufficiently large, and if NC has occurred in the first stage, there exists linear layer weights that are constant multiples of the corresponding features and sufficiently close to the stationary point in the optimization. In other words, if the previous conditions are satisfied, the norm of the linear layer is proportional to the norm of the corresponding per-class mean feature. We show that the norm of training features are larger for \textit{Few} classes in Sec. \ref{sec:4_wd_raise_norm}. Thus, the linear layer is trained to be larger for the \textit{Few} classes. Besides, the features and the corresponding linear layers' weights are aligned when NC occurs, so the orientation of the linear layers' weights hardly changes during this training. This is the same operation as when multiplicative LA makes adjustments to long-tailed data, increasing the norm of the linear layer for the tail class \citep{kim_adjusting_2020}. Indeed, if there exists some $c_0$ and $\gamma_0$, and for all $k$, $ \left\|\boldsymbol{\mu}_k\right\|_2 = c_0\mathbb{P}(Y=k)^{\gamma_0}$ holds, then the second-stage training is equivalent to multiplicative LA. In this theorem, we do not consider MaxNorm because we found that MaxNorm makes the norm of the weights in the linear layer closer to the same value before training, only facilitating convergence. For more details, see Appendix \ref{app:experiments}. 

Note also that this theorem does not guarantee that the second stage of WB is an operation equivalent to multiplicative LA when the number of classes is small. In fact, even though the norm of the training features is larger for \textit{Few} classes (Figure \ref{fig:exp1_norm}), WB fails to make the norm of the weights greater for \textit{Few} classes in CIFAR10-LT: see Appendix \ref{app:experiments}. This suggests that replacing the second stage of training with LA would be more generic and we confirm its validity in the following section.

\subsection{Is two-stage learning really necessary?}\label{sec:4.3_proposed_method}
We experimentally demonstrated the role of each training stage of WB as indicated by Theorems $1$ and $2$. For this purpose, we develop a method with a combination of essential operations and verify its high accuracy and FDR. In the first stage of training, FR and an ETF classifier are used to satisfy the conditions of Theorem $1$. We fix the linear layer and train the feature extractor with CE, WD, and FR; then adjust the norm of the linear layer with multiplicative LA. To show that these are effective operations implicitly carried out with WB, we compare the combination with WB. For the ablation study, we experimented by comparing WB with the following methods.
\begin{itemize}
    \item Training with CE or CB only (CE, CB).
    \item Training with WD and CE (WD).
    \item Using an ETF classifier as the linear layer and training only the feature extractor with WD and CE (WD\&ETF).
    \item Using an ETF classifier as the linear layer and training only the feature extractor with WD, FR, and CE (WD\&FR\&ETF).
\end{itemize}
As for the bottom three methods, we also compared with additive LA \citep{menon_long-tail_2020}, with multiplicative LA \citep{kim_adjusting_2020}, and without LA. To ensure that the model correctly classifies both samples of the head classes and tail classes, we consider the average of the per-class accuracy of all classes and each group, i.e., \textit{Many}, \textit{Medium}, and \textit{Few}.

\tabcolsep = 2pt
\begin{table}
    \centering
    \caption{FDRs and accuracy of models trained with each method for CIFAR100-LT. In addition to WD, training with FR and an ETF classifier further improves the FDR. However, only using these does not greatly improve accuracy; LA increases the accuracy of \textit{Medium} and \textit{Few} classes and enhances the average accuracy significantly as a result. In all cases, multiplicative LA improves the accuracy more than additive LA.}
    \label{tab:exp5_accfdr1}
    \begin{tabular}{llcccccc}
    \toprule
    & & \multicolumn{2}{c}{FDR} &\multicolumn{4}{c}{Accuracy (\%)} \\  \cmidrule(rl){3-4} \cmidrule(l){5-8}
    Method          & LA  & Train & Test & \textit{Many}  & \textit{Medium}     & \textit{Few} & Average  \\ \cmidrule(r){1-2} \cmidrule(rl){3-4}  \cmidrule(l){5-8}
    CE           & N/A   & $1.28 \!\!\times\!\! 10^2_{\pm0.9 \times 10^1}$ &  $4.16 \!\!\times\!\! 10^1_{\pm1.0 \times 10^0}$  &  $64.6_{\pm0.9}$ & $36.9_{\pm0.8}$ & $11.9_{\pm0.7}$ & $38.5_{\pm0.6}$ \\ \rowcolor[gray]{0.85} %
    CB              & N/A   & $8.17 \!\!\times\!\! 10^1_{\pm8.0 \times 10^0}$ &  $2.42 \!\!\times\!\! 10^1_{\pm0.6 \times 10^0}$ &  $47.6_{\pm1.0}$ & $23.2_{\pm0.6}$ & $5.61_{\pm0.4}$ & $26.1_{\pm0.4}$ \\ 
                  & N/A        &$2.87 \!\!\times\!\! 10^4_{\pm6.0 \times 10^3}$ &  $1.07 \!\!\times\!\! 10^2_{\pm0.2 \times 10^1}$ &  $75.9_{\pm0.5}$ & $45.3_{\pm0.6}$ & $13.9_{\pm0.7}$ & $46.0_{\pm0.4}$ \\
               & Add  &$2.87 \!\!\times\!\! 10^4_{\pm6.0 \times 10^3}$ &  $1.07 \!\!\times\!\! 10^2_{\pm0.2 \times 10^1}$ &  $70.7_{\pm0.9}$ & $45.7_{\pm0.9}$ & $\mathbf{30.9_{\pm0.9}}$ & $49.6_{\pm0.4}$ \\
    \multirow{-3}{*}{WD }& Mult & $2.87 \!\!\times\!\! 10^4_{\pm6.0 \times 10^3}$ &  $1.07 \!\!\times\!\! 10^2_{\pm0.2 \times 10^1}$ &  $72.6_{\pm0.7}$ & $48.5_{\pm0.8}$ & $29.5_{\pm0.9}$ & $50.8_{\pm0.4}$  \\ \rowcolor[gray]{0.85} %
    WB  & N/A & $2.94 \!\!\times\!\! 10^4_{\pm6.0 \times 10^3}$ &  $1.07 \!\!\times\!\! 10^2_{\pm0.2 \times 10^1}$ &  $73.8_{\pm0.7}$ & $50.2_{\pm0.6}$ & $25.6_{\pm1.0}$ & $50.6_{\pm0.2}$  \\  
           & N/A        & $3.33 \!\!\times\!\! 10^4_{\pm2.0 \times 10^3}$ &  $1.13 \!\!\times\!\! 10^2_{\pm0.1 \times 10^1}$ &  $76.3_{\pm0.3}$ & $46.0_{\pm0.4}$ & $15.5_{\pm0.6}$ & $46.9_{\pm0.2}$ \\ 
               & Add  & $3.33 \!\!\times\!\! 10^4_{\pm2.0 \times 10^3}$ &  $1.13 \!\!\times\!\! 10^2_{\pm0.1 \times 10^1}$  &  $73.8_{\pm0.4}$ & $48.9_{\pm0.3}$ & $25.8_{\pm0.4}$ & $50.2_{\pm0.2}$  \\   
    \multirow{-3}{*}{WD\&ETF}& Mult & $3.33 \!\!\times\!\! 10^4_{\pm2.0 \times 10^3}$ &  $1.13 \!\!\times\!\! 10^2_{\pm0.1 \times 10^1}$  &  $70.4_{\pm0.7}$ & $51.4_{\pm0.3}$ & $\mathbf{31.7_{\pm0.6}}$ & $51.7_{\pm0.3}$ \\ \rowcolor[gray]{0.85} %
     & N/A & $\mathbf{8.81 \!\!\times\!\! 10^4_{\pm2.0 \times 10^3}}$ &  $\mathbf{1.22 \!\!\times\!\! 10^2_{\pm0.1 \times 10^1}}$ &  $\mathbf{77.9_{\pm0.3}}$ & $46.8_{\pm1.0}$ & $15.3_{\pm0.3}$ & $47.6_{\pm0.5}$ \\ \rowcolor[gray]{0.85} %
     & Add & $\mathbf{8.85 \!\!\times\!\! 10^4_{\pm2.0 \times 10^3}}$ &  $\mathbf{1.22 \!\!\times\!\! 10^2_{\pm0.1 \times 10^1}}$ &  $75.1_{\pm0.3}$ & $49.3_{\pm1.0}$ & $26.2_{\pm0.8}$ & $50.9_{\pm0.3}$ \\ \rowcolor[gray]{0.85} %
    \multirow{-3}{*}{\begin{tabular}{l} WD\&FR \\ \&ETF  \end{tabular}} & Mult & $\mathbf{8.85 \!\!\times\!\! 10^4_{\pm2.0 \times 10^3}}$ &  $\mathbf{1.22 \!\!\times\!\! 10^2_{\pm0.1 \times 10^1}}$ &  $74.2_{\pm0.3}$ & $\mathbf{52.9_{\pm0.9}}$ & $29.9_{\pm0.8}$ & $\mathbf{53.0_{\pm0.3}}$ \\ 

    \bottomrule
    \end{tabular}
\end{table}

Table \ref{tab:exp5_accfdr1} lists the FDRs and accuracy of each method for CIFAR100-LT. We show the results on the other datasets including tabular data (CIFAR10-LT, mini-ImageNet-LT, ImageNet-LT, and Helena) and the other model (ResNeXt50) in Appendix \ref{app:experiments}. First, training with the ETF classifier in addition to WD increases the FDR for both training and test features. Using FR further improves the FDR. However, it does not boost the average accuracy much. In contrast, LA enhances the classification accuracy of \textit{Medium} and \textit{Few} classes in all cases; thus, it also increase the final average accuracy significantly. It was observed that multiplicative LA improved accuracy to a greater extent than additive LA and that WD\&FR\&ETF\&Multiplicative LA outperformed WB on average accuracy despite our method requiring only one-stage training. 

\section{Conclusion}\label{chap:Conclusion}

We theoretically and experimentally demonstrated how WB improves accuracy in LTR by investigating each components of WB. In the first stage of training, WD and CE cause following three effects, decrease in inter-class cosine similarity, reduction in the scaling parameters of BN, and improvement of features in the ease of increasing FDR, which enhance the FDR of the features. In the second stage, WD, CB, and the norm of features that increased in the tail classes work as multiplicative LA, by improving the classification accuracy of the tail classes. 
Our analysis also reveals a training method that achieves higher accuracy than WB by improving each stage based on the objectives. This method is simple, thus we recommend trying it first. A limitation is that our experiments were limited to MLP, ResNet, and ResNeXt and the usefulness in other models such as ViT \citep{dosovitskiy_image_2021} is unknown. Future research will include experiments and development of useful methods for such models.

\clearpage

\section*{Acknowledgements}
We would like to express my sincere gratitude to the members of Issei Sato Laboratory for their invaluable discussions. We are also grateful to the reviewers for their diligent critique, which has significantly improved the quality of this paper. This work was supported by JSPS KAKENHI Grant Number 20H04239 Japan.

\bibliography{references}
\bibliographystyle{iclr2024_conference}

\clearpage
\appendix
\section{Appendix} \label{chap:RelatedWork}

\subsection{Acronym and Notation}\label{app:notation}

Tables \ref{tab:abbreviation} and \ref{tab:notation} summarizes the abbreviation and notations used in this paper, respectively.

\begin{table}[h]
  \caption{Table of abbreviation.}
  \label{tab:abbreviation}
  \centering
  \begin{tabular}{lc}
  \toprule
  Abbreviation   & Definition \\ \midrule
  BN & batch normalization \citep{ioffe_batch_2015} \\
  CB   & class-balanced loss \citep{cui_class-balanced_2019} \\
  CE & cross entropy \\
  DNN & deep neural network \\
  ETF & equiangular tight frame \\
  FDR & Fisher’s discriminant ratio \citep{fisher_use_1936} \\
  FR & feature regularization \\
  LA & logit adjustment \citep{kim_adjusting_2020, menon_long-tail_2020} \\
  LTR   & long-tailed recognition \\
  MLP & multilayer perceptron \\
  NC & neural collapse \citep{papyan_prevalence_2020} \\
  ResBlock & residual block \citep{he_deep_2016} \\
  SELI & simplex-encoded-labels interpolation \\
  SGD & stochastic gradient descent \\
  WB & weight balancing \citep{alshammari_long-_2022} \\
  WD & weight decay \\

  \bottomrule
  \end{tabular}
\color{black}
  
\end{table}
\begin{table}[h]
  \caption{Table of notations.}
  \label{tab:notation}
  \centering
  \begin{tabular}{lc}
  \toprule
  Variable   & Definition \\ \midrule
  $C$   & number of classes \\
  $d$   & number of dimensions for features \\
  $N$ / $ N_k$   & number of samples / of class $k$ \\
  $\overline{N}$  & harmonic mean of number of samples per class \\
  $\lambda$, $\zeta$ & hyper parameter of weight decay / feature regularization\\
  $\rho$ & imbalance factor:  $\frac{\max_k{N_k}}{\min_k{N_k}}$ \\
  $\mathcal{X}$/ $\mathcal{Y}$ & domain of samples / labels \\
  $\mathbf{x}$ / $y$ / $\mathbf{z}$ & sample / label / logit\\
  $\mathcal{D} / \mathcal{D}_k$ & dataset of all classes / class $k$\\
  $\bm{f}$ / $\bm{g}$ / $\bm{h}$ & neural network of whole model / feature extractor / linear classifier \\
  $\featurej{}$ / $\barfeaturej{}$ & feature of sample $\mathbf{x}$ / normalized feature of sample $\mathbf{x}$ \\
  $\boldsymbol{\mu}$ / $\boldsymbol{\mu_k}$ &  mean of inner-class mean of features / inner-class mean of features for class $k$ \\
  $\Param$ / $\Param_g$ / $\Param_h$ & set of parameters of $\bm{f}$ / $\bm{g}$ / $\bm{h}$ \\
  $\boldsymbol{\theta}$ & parameter in $\Param$ \\
  $\mathbf{W}$ / $\weightj{k}$ & linear layers' weight matrix / vector of class $k$ \\
  $\ell$ / $\ell_{\mathrm{CE}}$ / $\ell_{\mathrm{CB}}$ & loss function / of CE / of CB \\
  $F$ / $F_{\mathrm{WB}}$ & objective function / in the second training stage of WB \\
  $\cos(\cdot, \cdot)$ & cosine similarity of two vectors \\
  $\tau$ / $\gamma$ & hyperparameter of additive LA / multiplicative LA \\

  \bottomrule
  \end{tabular}
  
\end{table}

\subsection{Related work}\label{app:related_work}
\subsubsection{Long-tailed recognition}
There are three main approaches to LTR; ``Class Re-balancing, Information Augmentation, and Module Improvement'' \citep{zhang_deep_2021}. ``Class Re-balancing'' adjusts the imbalance in the number of samples per class at various stages to prevent accuracy deterioration. It includes logit adjustment (LA) \citep{kim_adjusting_2020,menon_long-tail_2020} and balancing the loss function such as CB and dynamic semantic-scale-balanced loss \citep{ma_delving_2022}. ``Information Augmentation'' prevents the accuracy from degrading by supplementing the information of the tail classes that lacks the number of samples \citep{chu_feature_2020,liu_deep_2020,wang_rsg_2021}. ``Module Improvement'' improves accuracy by increasing the performance of each module of the network individually, e.g., training feature extractors and classifiers separately \citep{kang_decoupling_2020}, fixing the linear layer to the ETF classifier for the better feature extractor \citep{yang_inducing_2022}, and regularizing to occur NC \citep{liu_inducing_2023}. In recent years, as \citet{kang_exploring_2023} found contrastive learning \citep{hadsell_dimensionality_2006} is effective for imbalanced data, many methods have been proposed to use it \citep{kang_exploring_2023, li_targeted_2022, ma_simple_2021,tian_vl-ltr_2022}. \citet{ma_simple_2021}, \citet{long_retrieval_2022}, and \citet{tian_vl-ltr_2022} also include vision-language models such as CLIP \citep{radford_learning_2021} and leverage text features to improve accuracy. However, these usually have problems such as slow convergence and complex models \citep{liu_inducing_2023}. WB is a combination of ``Class Re-balancing'' and ``Module Improvement''. Table \ref{tab:related_work} compares our simplification of WB with typical existing methods, including those that have achieved SOTA. While many methods use complex innovations, we aimed to improve on a simple structure.

\paragraph{Two-stage learning}

Two-stage learning \citep{kang_decoupling_2020} is a method for improving accuracy by dividing LTR training into two stages: feature extractor training and classifier training. It is used in numerous methods \citep{cao_learning_2019, ma_simple_2021, ma_delving_2022, li_targeted_2022, tian_vl-ltr_2022, liu_inducing_2023, kang_exploring_2023}, including WB, because of its simple but significant improvement in accuracy.
Note that since our work analyzes WB, the formulation of two-stage learning is based on WB. For example, the classifier weights are initialized randomly at the start of the second stage of training in \citet{kang_decoupling_2020} but not in WB.

\color{black}

\tabcolsep = 3pt
\begin{table}

    \centering
    \caption{Comparison of existing LTR methods with our simplified method. A symbol \textcolor{blue}{\checkmark}\ indicates that the component is required, \textcolor{red}{-} indicates that the component is not required. We regard the deferred re-balancing optimization schedule \citep{cao_learning_2019} as two-stage training. Our method does not require complex technique like many other methods.}
    \label{tab:related_work}
    \begin{tabular}{lcccccccccccccc}
    \toprule
 Components 	\textbackslash\ Methods &  \rotatebox{90}{\cite{kim_adjusting_2020}} & \rotatebox{90}{\cite{menon_long-tail_2020}} & \rotatebox{90}{\cite{yang_inducing_2022}} & \rotatebox{90}{\cite{ma_delving_2022}} & \rotatebox{90}{\cite{liu_inducing_2023}} & \rotatebox{90}{\cite{alshammari_long-_2022}} & \rotatebox{90}{\cite{cao_learning_2019}} & \rotatebox{90}{\cite{kang_exploring_2023}} & \rotatebox{90}{\cite{li_targeted_2022}} & \rotatebox{90}{\cite{long_retrieval_2022}} & \rotatebox{90}{\cite{ma_simple_2021}} & \rotatebox{90}{\cite{tian_vl-ltr_2022}} & \rotatebox{90}{Ours} \\
 \midrule
Number of training stages & \textcolor{red}{1} & \textcolor{red}{1} & \textcolor{red}{1} & \textcolor{blue}{3} &\textcolor{blue}{2} & \textcolor{blue}{2} & \textcolor{blue}{2} & \textcolor{blue}{2} & \textcolor{blue}{2} & \textcolor{red}{1${}^*$} & \textcolor{blue}{2} & \textcolor{blue}{2${}^*$} & \textcolor{red}{1} \\
Devising loss functions & \textcolor{red}{-} & \textcolor{red}{-} & \textcolor{blue}{\checkmark} & \textcolor{blue}{\checkmark} & \textcolor{red}{-} & \textcolor{blue}{\checkmark} & \textcolor{blue}{\checkmark} & \textcolor{blue}{\checkmark} & \textcolor{blue}{\checkmark} & \textcolor{blue}{\checkmark}& \textcolor{blue}{\checkmark} & \textcolor{blue}{\checkmark} & \textcolor{red}{-} \\
Resampling & \textcolor{red}{-} & \textcolor{red}{-} & \textcolor{red}{-} & \textcolor{red}{-} & \textcolor{blue}{\checkmark} & \textcolor{red}{-} & \textcolor{red}{-} & \textcolor{blue}{\checkmark} & \textcolor{blue}{\checkmark} & \textcolor{red}{-} & \textcolor{blue}{\checkmark} & \textcolor{red}{-} & \textcolor{red}{-} \\
Contrastive learning & \textcolor{red}{-} & \textcolor{red}{-} & \textcolor{red}{-} & \textcolor{red}{-} & \textcolor{red}{-} & \textcolor{red}{-} & \textcolor{red}{-} & \textcolor{blue}{\checkmark} & \textcolor{blue}{\checkmark} & \textcolor{red}{-} & \textcolor{blue}{\checkmark} & \textcolor{blue}{\checkmark} & \textcolor{red}{-} \\
Training of linear layer & \textcolor{blue}{\checkmark} & \textcolor{blue}{\checkmark} & \textcolor{red}{-} & \textcolor{blue}{\checkmark} & \textcolor{blue}{\checkmark} & \textcolor{blue}{\checkmark} & \textcolor{blue}{\checkmark} & \textcolor{blue}{\checkmark} & \textcolor{blue}{\checkmark} & \textcolor{blue}{\checkmark} & \textcolor{red}{-}\tablefootnote{This method does not require linear layers.} & \textcolor{blue}{\checkmark} \tablefootnote{This method requires the LGR head to be trained \citep{tian_vl-ltr_2022}.} & \textcolor{red}{-} \\
Extra text encoder & \textcolor{red}{-} & \textcolor{red}{-} & \textcolor{red}{-} & \textcolor{red}{-} & \textcolor{red}{-} & \textcolor{red}{-} & \textcolor{red}{-} & \textcolor{red}{-} & \textcolor{red}{-} & \textcolor{blue}{\checkmark}  & \textcolor{blue}{\checkmark} & \textcolor{blue}{\checkmark} & \textcolor{red}{-} \\
Extra image encoder & \textcolor{red}{-} & \textcolor{red}{-} & \textcolor{red}{-} & \textcolor{red}{-} & \textcolor{red}{-} & \textcolor{red}{-} & \textcolor{red}{-} & \textcolor{red}{-} & \textcolor{red}{-} & \textcolor{blue}{\checkmark} & \textcolor{red}{-} & \textcolor{red}{-} & \textcolor{red}{-} \\
Extra dataset & \textcolor{red}{-} & \textcolor{red}{-} & \textcolor{red}{-} & \textcolor{red}{-} & \textcolor{red}{-} & \textcolor{red}{-} & \textcolor{red}{-} & \textcolor{red}{-} & \textcolor{red}{-} & \textcolor{red}{-} & \textcolor{red}{-} & \textcolor{blue}{\checkmark} & \textcolor{red}{-} \\
    \bottomrule
    \end{tabular}
\end{table}

\tabcolsep = 2pt
\subsection{Proof}\label{app:proof}
\subsubsection{Proof of Theorem \ref{theory:cone_effect}}\label{app:theory_cone_effect}

\begin{proof}
    Define $p_k(\featurei)$ as $\softmaxj{k}{i}$.
    For $\mathbf{x} \in \{\mathbf{x}_i, \mathbf{x}_j\}$, we have 
    \begin{align}
        \begin{autobreak}
            \left\|\frac{\partial \mathcal{\ell_{\mathrm{CE}}}}{\partial \featurej{}}\right\|_2 
            = \left\|\frac{\partial \mathcal{\ell_{\mathrm{CE}}}}{\partial \featurej{}}\right\|_2 \|\weightj{y}\|_2
            \ge \left|\left\langle \frac{\partial \mathcal{\ell_{\mathrm{CE}}}}{\partial \featurej{}}, \weightj{y} \right\rangle\right|
            = \left|-(1-p_y(\featurej{}))\weightj{y}^\top\weightj{y} + \sum_{l \neq y} p_l(\featurej{})\weightj{l}^\top\weightj{y} \right|
            = \left|-(1-p_y(\featurej{})) - \frac{1}{C-1} \sum_{l \neq y}  p_l(\featurej{})\right|
            = \frac{C}{C-1}(1-p_y(\featurej{}))
            \ge (1-p_y(\featurej{})).
        \end{autobreak}
    \end{align}
    Therefore, 
    \begin{align}
        \label{eq:th_new_prob}
        & (1-p_y(\featurej{})) \le \left\|\frac{\partial \mathcal{\ell_{\mathrm{CE}}}}{\partial \featurej{}}\right\|_2 \le \epsilon \notag\\
        &\Rightarrow p_y(\featurej{}) \ge 1-\epsilon.
    \end{align}

    For $p_y(\featurej{})$, using Jensen's inequality, we get
    \begin{align}
    \label{eq:th_new_softmax}
    \begin{autobreak}
        p_y(\featurej{}) 
        = \softmaxj{y}{}
        = \frac{\exp(\weightj{y}^\top\featurej{})}{\exp(\weightj{y}^\top\featurej{} + (C-1)\sum_{l\neq y}\frac{1}{C-1}\exp\left(\weightj{l}^\top\featurej{}\right)}
        \le \frac{\exp(\weightj{y}^\top\featurej{})}{\exp(\weightj{y}^\top\featurej{} + (C-1)\exp\left(\left(\frac{1}{C-1}\sum_{l\neq y}\weightj{l}\right)^\top\featurej{}\right)}
        = \frac{\exp(\weightj{y}^\top\featurej{})}{\exp(\weightj{y}^\top\featurej{} + (C-1)\exp\left(-\frac{1}{C-1}\weightj{y}^\top\featurej{}\right)}
        = \frac{1}{1 + (C-1)\exp\left(-\frac{C}{C-1}\weightj{y}^\top\featurej{}\right)}.
        
    \end{autobreak}
    \end{align}
    
    Define $\barfeaturei$ as $\frac{\featurei}{\|\featurei\|_2}$. 
    By (\ref{eq:th_new_prob}) and (\ref{eq:th_new_softmax}), we have
    \begin{align}
        \label{eq:th_new_inpro}
        & (C-1)\exp\left(-\frac{C}{C-1}\weightj{y}^\top\featurej{}\right) \le \frac{\epsilon}{1-\epsilon} \notag\\
        &\Rightarrow \weightj{y}^\top\barfeaturej{} \ge \frac{1}{\|\featurej{}\|_2}\frac{C-1}{C}\log\left(\frac{(C-1)(1-\epsilon)}{\epsilon}\right) \notag \\
        &\Rightarrow \weightj{y}^\top\barfeaturej{} \ge \frac{1}{L}\frac{C-1}{C}\log\left(\frac{(C-1)(1-\epsilon)}{\epsilon}\right) \equiv \delta.
    \end{align}

    Note that $\epsilon \le \frac{1}{C}$, $L \le 2\sqrt{2}\log{(C-1)} < 2\sqrt{2}\log{(C-1)}\frac{C}{C-1}$, which means $\delta > \frac{1}{\sqrt{2}}$. Also, $\delta$ is a lower bound of the inner product of the two unit vectors. Therefore, we have
    \begin{align}
        \label{eq:th_new_delta_bound}
        \delta \in \left(\frac{1}{\sqrt{2}}, 1\right].
    \end{align}
    Let $\angle(\mathbf{s}, \mathbf{t}) \equiv \arccos({\cos(\mathbf{s}, \mathbf{t})})$, which represents the angle between vectors $\mathbf{s}$ and $\mathbf{t}$. For example, we have $\angle(\weightj{y_i}, \weightj{y_j}) = \arccos{\left(-\frac{1}{C-1}\right)}$. Using (\ref{eq:th_new_delta_bound}), the following holds.
    \begin{align}
        \angle(\barfeaturej{i}, \barfeaturej{j}) &\ge \angle(\weightj{y_i}, \weightj{y_j}) - \angle(\weightj{y_i}, \barfeaturej{i}) - \angle(\weightj{y_j}, \barfeaturej{j}) \notag \\
        &\ge \angle(\weightj{y_i}, \weightj{y_j}) - 2\max_{\mathbf{x} \in \{\mathbf{x}_i, \mathbf{x}_j\}}(\angle(\weightj{y}, \barfeaturej{})) \notag \\
        &\ge 0.
    \end{align}
    Rewrite $\angle(\weightj{y_i}, \weightj{y_j})$ and $\max_{\mathbf{x} \in \{\mathbf{x}_i, \mathbf{x}_j\}}(\angle(\weightj{y}, \barfeaturej{x}))$ as $\alpha$ and $\beta$ respectively.
    Since the cosine is monotonically decreasing in $[0, \pi]$, we get
    \begin{align}
        \label{eq:th_new_tri}
        \barfeaturej{i}^\top\barfeaturej{j} &\le \cos(\alpha - 2\beta) \notag \\
        &= \cos\alpha(2\cos^2\beta - 1) + 2\sin\alpha\sin\beta\cos\beta \notag \\
        &\le \cos\alpha(2\cos^2\beta - 1) + 2\sqrt{1-\cos^2\beta}\cos\beta
        
    \end{align}

    Note that $x\sqrt{1-x^2}$ is monotonically decreasing and $2x^2 \ge 1$ holds in $\left[\frac{1}{\sqrt{2}}, 1\right]$.
    Therefore, from (\ref{eq:th_new_inpro}), (\ref{eq:th_new_delta_bound}), and (\ref{eq:th_new_tri}), the following holds.
    \begin{align}
        \cos\left(\featurei, \featurej{j}\right) \le 2\delta\sqrt{1-\delta^2}.
    \end{align}
\end{proof}

\subsubsection{Proof of Theorem \ref{theory:wb}}

Define $\weightjhat{k}$ be $\frac{\overline{N}}{\lambda N} \boldsymbol{\mu}_k$. We prove a more strict theorem. You can easily derive Theorem \ref{theory:wb} from the following theorem.
\begin{theorem}\label{theory:2}
For any $k \in \mathcal{Y}$, if there exists $\weightj{k}^*$ s.t. $\left. \frac{\partial \wbobj}{\partial \weightj{k}}\right|_{\weightj{k} = \weightj{k}^*} = \mathbf{0}$ and $\|\weightj{k}^*\|_2 = O\left(\frac{1}{\lambda \rho C}\right)$,  we have
\begin{align}
    \forall k \in \mathcal{Y}, \left\|\weightj{k}^* - \weightjhat{k} \right\|_2 \le \frac{\overline{N}}{\lambda N} \|\boldsymbol{\mu}\|_2 + O\left(\frac{1}{\lambda^2 \rho^2 C^2}\right).
\end{align}
\end{theorem}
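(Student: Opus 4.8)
The plan is to analyze the stationarity condition $\left.\frac{\partial \wbobj}{\partial \weightj{k}}\right|_{\weightj{k}=\weightj{k}^*}=\mathbf{0}$ directly and show that it forces $\weightj{k}^*$ to be approximately $\weightjhat{k}=\frac{\overline{N}}{\lambda N}\boldsymbol{\mu}_k$. First I would compute the gradient of $\wbobj(\mathbf{W};\mathcal{D})=\frac{1}{N}\sum_i \cbloss(\mathbf{z}_i,y_i)+\frac{\lambda}{2}\sum_k\|\weightj{k}\|_2^2$ with respect to $\weightj{k}$. Writing $p_l(\featurei)$ for the softmax probability $\softmaxj{l}{i}$, the CB loss contributes, for each sample $i$, a term weighted by $\frac{\overline{N}}{N_{y_i}}$, and the gradient in $\weightj{k}$ picks up $(\mathbbm{1}[y_i=k]-p_k(\featurei))\featurei$. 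Grouping the sum over $i$ by class and using $\sum_{i\in\mathcal{D}_j}\featurei = N_j\boldsymbol{\mu}_j$, the stationarity condition becomes
\begin{align}
\lambda\weightj{k}^* = \frac{\overline{N}}{N}\boldsymbol{\mu}_k - \frac{1}{N}\sum_{i=1}^N \frac{\overline{N}}{N_{y_i}} p_k(\featurei)\,\featurei,
\end{align}
so that $\weightj{k}^*-\weightjhat{k} = -\frac{1}{\lambda N}\sum_i \frac{\overline{N}}{N_{y_i}}p_k(\featurei)\featurei$. The whole theorem reduces to bounding the norm of this residual sum.

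The key observation to control the residual is that the softmax probabilities $p_k(\featurei)=\softmaxj{k}{i}$ are small when $\|\weightj{l}^*\|_2$ is small for all $l$. Precisely, since $\|\featurei\|_2$ is bounded (this is where one needs the first-stage conclusions — features have bounded norm, and combined with NC the per-class means $\boldsymbol{\mu}_k$ are controlled), and $\|\weightj{l}^*\|_2\le O(1/(\lambda\rho C))$, each inner product $\weightj{l}^{*\top}\featurei$ is bounded by something of order $O(1/(\lambda\rho C))$; then $p_k(\featurei)\le \frac{\exp(\weightj{k}^{*\top}\featurei)}{\sum_l \exp(\weightj{l}^{*\top}\featurei)}\le \frac{1}{C}\exp(O(1/(\lambda\rho C)))\le \frac{1}{C}(1+O(1/(\lambda\rho C)))$ for the dominant part, or more crudely $p_k(\featurei)=O(1/C)$. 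Substituting this into the residual, and using $\frac{1}{N}\sum_i\frac{\overline{N}}{N_{y_i}} = \frac{\overline{N}}{N}\sum_k \frac{N_k}{N_k}\cdot\frac{1}{?}$ — more carefully, $\frac{1}{N}\sum_{i}\frac{\overline{N}}{N_{y_i}}=\frac{\overline{N}}{N}\sum_{k}1 = \frac{\overline{N}C}{N}$, and each $\|\featurei\|_2\le L$, we get $\left\|\weightj{k}^*-\weightjhat{k}\right\|_2 \le \frac{1}{\lambda}\cdot\frac{\overline{N}C}{N}\cdot\frac{1}{C}\cdot L\cdot O(1/(\lambda\rho C))$. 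Tracking the dependence on $\overline{N}/N$ (which is $\Theta(1/(\rho C))$ up to constants, since $\overline{N}=C(\sum 1/N_k)^{-1}$ and $N=\sum N_k$, giving $\overline{N}/N = \Theta(1/(\rho C))$ in the long-tailed regime roughly) yields the claimed $O(1/(\lambda^2\rho^2 C^2))$ bound, plus the unavoidable $\frac{\overline{N}}{\lambda N}\|\boldsymbol{\mu}\|_2$ term that appears because $\sum_k p_k(\featurei)=1$ but the "uniform" part $\frac{1}{C}\sum_k \featurei$ does not exactly cancel — it leaves a contribution proportional to $\boldsymbol{\mu}$. Setting $\boldsymbol{\mu}=\mathbf{0}$ kills that term and recovers Theorem \ref{theory:wb}.

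The main obstacle I anticipate is making the estimate on $p_k(\featurei)$ tight enough and bookkeeping the orders in $\rho$, $C$, $\lambda$, $\overline{N}$, $N$ correctly: one needs $p_k(\featurei)$ to be not just $O(1/C)$ but $\frac{1}{C}+O(1/(\lambda\rho C^2))$-type accuracy so that when summed against $\featurei$ the leading $\frac{1}{C}\sum_k$ piece isolates exactly $\boldsymbol{\mu}$ (hence the $\frac{\overline{N}}{\lambda N}\|\boldsymbol{\mu}\|_2$ term) and the remainder is genuinely second order. This requires a first-order Taylor expansion of the softmax around the point where all logits $\weightj{l}^{*\top}\featurei$ are equal, controlling the Lipschitz constant of softmax, and then bounding $\|\featurei\|_2$ uniformly — for which I would invoke the bounded-feature-norm situation established in Section \ref{chap:4_first_stage} (and which is also what makes $\|\weightj{k}^*\|_2\le O(1/(\lambda\rho C))$ a natural regime, since at stationarity $\lambda\weightj{k}^*\approx\frac{\overline{N}}{N}\boldsymbol{\mu}_k$ and $\overline{N}/N=\Theta(1/(\rho C))$). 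A secondary subtlety is that $\overline{N}/(\lambda N)$ must be shown to be $\Theta(1/(\lambda\rho C))$ so that multiplying by the $O(1/(\lambda\rho C))$ bound on $\|\weightj{l}^*\|_2$ gives the stated $O(1/(\lambda^2\rho^2 C^2))$; this is a direct computation from the definitions of $\overline{N}$, $N$, and $N_k=N_1\rho^{-(k-1)/(C-1)}$. Once these order estimates are in place, deriving Theorem \ref{theory:wb} from Theorem \ref{theory:2} is immediate by setting $\boldsymbol{\mu}=\mathbf{0}$.
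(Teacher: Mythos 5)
Your proposal is correct and follows essentially the same route as the paper: the explicit gradient of $\wbobj$, the first-order expansion of the softmax around $\frac{1}{C}$ under the assumption $\|\weightj{l}^*\|_2 \le O\left(\frac{1}{\lambda\rho C}\right)$ (which isolates the $\frac{\overline{N}}{\lambda N}\boldsymbol{\mu}$ term and leaves an $O\left(\frac{1}{\lambda^2\rho^2 C^2}\right)$ remainder), and the estimate $\frac{\overline{N}}{N} = O\left(\frac{1}{\rho C}\right)$, which the paper isolates as Lemma \ref{lemma:ratio_amhm}. The only cosmetic difference is that you solve the stationarity equation for $\weightj{k}^*$ directly, whereas the paper evaluates the gradient at $\weightjhat{k}$ and subtracts the (vanishing) gradient at $\weightj{k}^*$; the two rearrangements are equivalent.
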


Before proving the theorem, we first present the following lemmas.
\begin{lemma}\label{lemma:ratio_amhm}
For the dataset of Imbalance rate $\rho$, the following holds.

\begin{align}
    \frac{\overline{N}}{N} = O\left(\frac{1}{\rho C}\right).
\end{align}
\end{lemma}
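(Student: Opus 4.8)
\textbf{Proof plan for Lemma \ref{lemma:ratio_amhm}.}
The goal is to show $\overline{N}/N = O(1/(\rho C))$, where $\overline{N}$ is the harmonic mean of the per-class sample counts and $N = \sum_k N_k$. The plan is to bound $\overline{N}$ from above and $N$ from below, each by a clean expression in $N_1, \rho, C$, and then combine. Recall from the preliminaries that $N_k = N_1 \rho^{-(k-1)/(C-1)}$, so every quantity in sight is an explicit geometric-type sum.

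First I would handle $\overline{N}$. By definition $\overline{N} = C\left(\sum_{k=1}^C 1/N_k\right)^{-1}$, and since all terms $1/N_k$ are positive, $\sum_{k=1}^C 1/N_k \ge 1/N_C = \rho/N_1$; hence $\overline{N} \le C N_1/\rho$. (One could get a tighter constant by summing the geometric series exactly, but for an $O(\cdot)$ bound keeping only the largest term $1/N_C$ suffices.) Next I would lower-bound $N$: since $N = \sum_{k=1}^C N_k \ge N_1$ (just keep the $k=1$ term), we get $1/N \le 1/N_1$. Multiplying the two bounds gives
\begin{align}
\frac{\overline{N}}{N} \le \frac{C N_1}{\rho}\cdot\frac{1}{N_1} = \frac{C}{\rho}.
\end{align}
This is $O(C/\rho)$, not $O(1/(\rho C))$, so the crude bound $N \ge N_1$ is too lossy — the real content of the lemma is that $N$ is itself of order $C$ times a typical class size, so I need a genuinely $\Theta(C)$-sized lower bound on $N$ relative to $N_C$.

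The fix, and the main (mild) obstacle, is to lower-bound $N$ by something proportional to $C\cdot N_C$ up to constants depending only on $\rho$. Write $N = \sum_{k=1}^C N_1 \rho^{-(k-1)/(C-1)} = N_C \sum_{k=1}^C \rho^{(C-k)/(C-1)} = N_C \sum_{j=0}^{C-1}\rho^{j/(C-1)}$. Each term $\rho^{j/(C-1)} \ge 1$ (as $\rho \ge 1$), so $N \ge C\,N_C$, i.e. $N \ge C N_1/\rho$. Combining with $\overline{N} \le C N_1/\rho$ we could only conclude $\overline{N}/N \le 1$, which again is not tight enough; the point is that $\overline{N}$ is comparable to $N_C$ (the \emph{smallest} count), not to $N_C\cdot C$. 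Indeed, refine the harmonic-sum bound: $\sum_{k=1}^C 1/N_k \ge \sum_{k=1}^C 1/N_1 \cdot \rho^{(k-1)/(C-1)} = (1/N_1)\sum_{j=0}^{C-1}\rho^{j/(C-1)} \ge C/N_1$, whence $\overline{N} \le N_1$. Then
\begin{align}
\frac{\overline{N}}{N} \le \frac{N_1}{C\,N_1/\rho} = \frac{\rho}{C},
\end{align}
which still is the wrong direction in $\rho$. The resolution is that $\overline{N}$ must be bounded by the \emph{smallest} term's reciprocal's inverse: $\sum_k 1/N_k \ge 1/N_C$ gives $\overline{N}\le C N_C$, while also $\sum_k 1/N_k \ge (C-1)/N_1$-type bounds give $\overline{N} = O(N_C)$ once one keeps enough small-index terms; simultaneously $N \ge \sum_k N_k$ where the dominant terms are the large ones, giving $N = \Omega(C N_1/\log\rho)$ or, more simply, $N \ge N_1 = \rho N_C$. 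Putting $\overline{N} = O(N_C \cdot C)$ against $N \ge \rho N_C$ yields $\overline{N}/N = O(C/\rho)$; and putting $\overline{N}\le N_1/(\text{something})$... — the clean statement the authors want follows by taking $\overline{N} \le C\,N_C$ (harmonic mean is at most $C$ times the minimum... actually harmonic mean is at most the arithmetic mean, and at least the minimum, so $\overline{N}\le \frac{1}{C}N \le N_1$ and $\overline{N}\ge N_C$) together with $N \ge C N_C \ge \rho^{-1} C N_1 $, so $\overline{N}/N \le N_1/(C N_C) = \rho/C$. Since the paper asserts $O(1/(\rho C))$, I would recheck the intended normalization; most likely the cleanest honest route is: $\overline{N} = C(\sum 1/N_k)^{-1} \le C(1/N_C)^{-1}/C \cdot(\ldots)$ — concretely $\sum_{k}1/N_k \ge C/\overline{\ }$... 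The safe final step: use $\overline{N} \le \frac{1}{C}\sum_k N_k = N/C$ is false direction; use instead $\overline{N}\le N_1$ and $N = \Theta(\rho^{1/(C-1)}\text{-geometric}) \ge C N_1 \rho^{-1}$ is also already shown, so the lemma as the paper needs it reads $\overline{N}/N = O(\rho/C)$ unless an extra factor of $\rho^2$ is folded into later constants — I would verify against the Theorem \ref{theory:2} application (where $\overline{N}/(\lambda N)$ must be $O(1/(\lambda\rho C))$) and, if the exponents there absorb it, present the bound $\overline{N}/N \le N_1 / (C N_1 \rho^{-1}) = \rho/C$ replaced by the stronger $\overline{N}\le N_C$-based estimate that the authors evidently intend, namely treating $\overline{N} = \Theta(N_C)$ for fixed $\rho$ and $N = \Theta(C N_1)$, giving $\overline{N}/N = \Theta(N_C/(C N_1)) = \Theta(1/(\rho C))$. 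That last line — $\overline{N} = \Theta(N_C)$, proven by sandwiching the harmonic mean between $N_C$ and, via $\sum 1/N_k \ge \frac{C-1}{2}\cdot\frac{1}{N_C}$ when $\rho$ is bounded away from $1$, a constant multiple of $N_C$ — is the one genuine inequality to nail down, and it is where I would spend the effort.
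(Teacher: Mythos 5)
Your proposal never closes: it cycles through several candidate bounds ($C/\rho$, then $\rho/C$, then $1$), correctly rejects each as too weak, and then settles on a final step that is both circular and hinges on a false inequality. The circularity: you end by asserting $\overline{N}=\Theta(N_C)$ ``for fixed $\rho$'' and $N=\Theta(CN_1)$ and then read off $\overline{N}/N=\Theta(1/(\rho C))$ --- but if the implied constants are allowed to depend on $\rho$, you cannot afterwards extract a clean factor of $1/\rho$. In fact $\overline{N}/N_C\to\rho\ln\rho/(\rho-1)$ and $N/(CN_1)\to(\rho-1)/(\rho\ln\rho)$ as $C\to\infty$, so both ``constants'' genuinely move with $\rho$. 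The one inequality you single out as the thing to nail down, $\sum_k 1/N_k\ge\frac{C-1}{2}\cdot\frac{1}{N_C}$, is equivalent to $S\ge(C-1)\rho/2$ for $S\equiv\sum_{k=1}^C\rho^{(k-1)/(C-1)}$, and it is false for large $\rho$: with $\rho=100$, $C=101$ the left side is about $2.2\times10^3$ against $5\times10^3$ on the right (asymptotically $S\sim(C-1)(\rho-1)/\ln\rho$, which loses to $(C-1)\rho/2$ once $\rho>e^{2}$; the failure regime is large $\rho$, not $\rho$ near $1$).

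The paper's route is much more direct and you should compare against it: substituting $N_k=N_1\rho^{-(k-1)/(C-1)}$ into both definitions gives the exact identity $\overline{N}/N=\rho C/S^2$ with $S$ the geometric sum above; then $S\ge\rho$ (keep the last term) yields $\overline{N}/N\le C/\rho=O(1/\rho)$ as $\rho\to\infty$ at fixed $C$, and $S\ge C$ (every term is at least $1$) yields $\overline{N}/N\le\rho/C=O(1/C)$ as $C\to\infty$ at fixed $\rho$. Note that this is all the paper establishes --- two marginal limits, not a joint bound with a universal constant --- and your instinct to ``recheck the intended normalization'' is in fact well founded: since $S=\Theta(C\rho/\ln\rho)$, the true joint rate is $\Theta(\ln^2\rho/(\rho C))$, so a uniform bound $\overline{N}/N\le K/(\rho C)$ is unattainable. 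Had you written down the closed form for $S$ at the outset, both the correct (marginal) argument and this caveat would have been immediate.
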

\begin{proof}
    Since $N_k = n\rho^{-\frac{k-1}{C-1}}$ holds, the harmonic mean and the number of all samples are
    \begin{align}
        \begin{autobreak}
            \overline{N}
            = \frac{C}{\sum_{k=1}^C \frac{1}{N_k}}
            = \frac{nC}{\sum_{k=1}^C \rho ^{\frac{k-1}{C-1}}},
        \end{autobreak}
    \end{align}
    \begin{align}
        \begin{autobreak}
            N
            = \sum_{k=1}^C{N_k}
            = n\sum_{k=1}^C \rho ^{-\frac{k-1}{C-1}}
            = \frac{n\sum_{k=1}^C \rho ^{\frac{k-1}{C-1}}}{\rho }.
        \end{autobreak}
    \end{align}
    Therefore,
    \begin{align}\label{eq:lemma1_amhmratio}
    \begin{autobreak}
        \frac{\overline{N}}{N}
        = \frac{\rho C}{\left(\sum_{k=1}^C \rho ^{\frac{k-1}{C-1}}\right)^2}
        = \frac{\rho C \left(\rho^{\frac{1}{C-1}}-1\right)^2}{\left(\rho^{\frac{C}{C-1}} -1\right)^2}.
    \end{autobreak}
    \end{align}
    From (\ref{eq:lemma1_amhmratio}), $\frac{\overline{N}}{N} = O\left(\frac{1}{\rho}\right) \ (\rho \to \infty)$ is obvious. In addition, the following holds.
    \begin{align}
        \begin{autobreak}
        \lim_{C\to \infty}\frac{\overline{N}C}{N}
        = \lim_{C\to \infty} \frac{\rho C^2}{\left(\sum_{k=1}^C \rho ^{\frac{k-1}{C-1}}\right)^2}
        \le \lim_{C\to \infty} \frac{\rho C^2}{C^2}
        = \rho.
        \end{autobreak}
    \end{align}
    Thus, $\frac{\overline{N}}{N} = O\left(\frac{1}{C}\right) \ (C \to \infty)$ is also satisfied.
     
\end{proof}

\begin{lemma}\label{lemma:old_main}
When $\weightj{k} = \frac{\overline{N}}{\lambda N} \boldsymbol{\mu}_k$ is satisfied, we have
\begin{align}
    \forall k \in \mathcal{Y},\ \frac{\partial \wbobj}{\partial \weightj{k}} \le \frac{\overline{N}}{N}\boldsymbol{\mu} + O\left(\frac{1}{\lambda \rho^2 C^2}\right).
\end{align}
\end{lemma}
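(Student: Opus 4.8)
The plan is to evaluate $\partial\wbobj/\partial\weightj{k}$ in closed form, substitute the candidate $\weightj{k}=\frac{\overline N}{\lambda N}\boldsymbol{\mu}_k$ so that the two leading terms cancel, and then estimate what remains in the regime where every logit is tiny. Since the features are frozen in the second stage and the head is linear, $(\mathbf z_i)_k=\weightj{k}^\top\featurej{i}$, and differentiating $\cbloss(\mathbf z_i,y_i)=-\frac{\overline N}{N_{y_i}}\log\softmaxj{y_i}{i}$ gives $\frac{\partial\cbloss}{\partial\weightj{k}}=-\frac{\overline N}{N_{y_i}}\bigl(\mathbb{1}[y_i=k]-p_k(\featurej{i})\bigr)\featurej{i}$, where $p_k(\featurej{i})\equiv\softmaxj{k}{i}$. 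Averaging over the dataset, grouping the samples by class, and using $\sum_{i:\,y_i=c}\featurej{i}=N_c\boldsymbol{\mu}_c$ yields
\begin{align}
\frac{\partial\wbobj}{\partial\weightj{k}}
=-\frac{\overline N}{N}\boldsymbol{\mu}_k
+\frac{\overline N}{N}\sum_{c=1}^{C}\frac{1}{N_c}\sum_{i:\,y_i=c}p_k(\featurej{i})\,\featurej{i}
+\lambda\weightj{k}.
\end{align}
With $\weightj{k}=\frac{\overline N}{\lambda N}\boldsymbol{\mu}_k$ we have $\lambda\weightj{k}=\frac{\overline N}{N}\boldsymbol{\mu}_k$, which cancels the first term exactly, leaving $\frac{\partial\wbobj}{\partial\weightj{k}}=\frac{\overline N}{N}\sum_{c=1}^{C}\frac{1}{N_c}\sum_{i:\,y_i=c}p_k(\featurej{i})\,\featurej{i}$.

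It then remains to show the residual sum equals $\boldsymbol{\mu}$ up to a relative error of order $\frac{1}{\lambda\rho C}$. Assuming the frozen features are bounded, so $\|\featurej{i}\|_2,\|\boldsymbol{\mu}_k\|_2=O(1)$, Lemma \ref{lemma:ratio_amhm} gives $\|\weightj{l}\|_2=\frac{\overline N}{\lambda N}\|\boldsymbol{\mu}_l\|_2=O\!\bigl(\tfrac{1}{\lambda\rho C}\bigr)$ (consistent with the norm hypothesis of Theorem \ref{theory:2}), hence every logit satisfies $|\weightj{l}^\top\featurej{i}|=O\!\bigl(\tfrac{1}{\lambda\rho C}\bigr)$. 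Bounding numerator and denominator of the softmax with logits of magnitude at most $\tau=O\!\bigl(\tfrac{1}{\lambda\rho C}\bigr)$ gives $p_k(\featurej{i})\in\frac1C[e^{-2\tau},e^{2\tau}]=\frac1C\bigl(1+O(\tfrac{1}{\lambda\rho C})\bigr)$, uniformly in $k$ and $i$. Substituting this,
\begin{align}
\sum_{c=1}^{C}\frac{1}{N_c}\sum_{i:\,y_i=c}p_k(\featurej{i})\,\featurej{i}
=\frac1C\sum_{c=1}^{C}\boldsymbol{\mu}_c+O\!\Bigl(\tfrac{1}{\lambda\rho C}\Bigr)
=\boldsymbol{\mu}+O\!\Bigl(\tfrac{1}{\lambda\rho C}\Bigr),
\end{align}
where the error is a sum of $C$ per-class vectors each of size $O(\tfrac{1}{\lambda\rho C^2})$. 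Multiplying by $\frac{\overline N}{N}=O\!\bigl(\tfrac{1}{\rho C}\bigr)$ (Lemma \ref{lemma:ratio_amhm}) gives $\frac{\partial\wbobj}{\partial\weightj{k}}=\frac{\overline N}{N}\boldsymbol{\mu}+O\!\bigl(\tfrac{1}{\lambda\rho^2C^2}\bigr)$, which is the claimed bound (read as a bound on the $\ell_2$ norm of the difference).

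The gradient computation and the cancellation are routine; the delicate step is the softmax estimate. One has to make the constant in $p_k(\featurej{i})=\frac1C(1+O(\tfrac{1}{\lambda\rho C}))$ fully explicit and verify it is uniform over all $C$ classes and all $N$ training points, since the final error is assembled by summing $C$ per-class error vectors and any non-uniformity — say a bound that holds only in an averaged sense — would spoil the $\rho^{-2}C^{-2}$ rate. The only other thing to pin down is the standing assumption that the frozen features lie in a bounded set: without it $\|\boldsymbol{\mu}_k\|_2$, and hence $\|\weightj{k}\|_2$ and the logits, need not be $O(1/(\lambda\rho C))$, and the asymptotics in $\lambda,\rho,C$ would not close.
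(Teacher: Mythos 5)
Your proposal is correct and follows essentially the same route as the paper: compute the gradient of $\wbobj$, observe that $\lambda\weightj{k}$ cancels the $-\frac{\overline{N}}{N}\boldsymbol{\mu}_k$ term at the candidate point, approximate each softmax output by $\frac{1}{C}+O\bigl(\frac{1}{\lambda\rho C^2}\bigr)$ using the smallness of the logits, and combine with Lemma \ref{lemma:ratio_amhm} to obtain the $O\bigl(\frac{1}{\lambda\rho^2 C^2}\bigr)$ remainder. Your explicit flagging of the bounded-feature assumption and the uniformity of the softmax estimate makes precise what the paper leaves implicit, but the argument is the same.
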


\begin{proof}[Proof of Lemma \ref{lemma:old_main}]
For any $k \in \mathcal{Y}$, it holds that
\begin{align}
    \label{eq:th1}
    \begin{autobreak}
        \frac{\partial \wbobj}{\partial \weightj{k}} = 
        \frac{\overline{N}}{N} \sum_{j=1}^C \left(\frac{1}{N_{j}} \sum_{(\mathbf{x}_i, j) \in \mathcal{D}_j} \softmaxj{k}{i} \featurej{i} \right) 
        - \frac{\overline{N}}{N} \boldsymbol{\mu}_k  + \lambda \weightj{k}.
    \end{autobreak}
\end{align}
In addition, using Lemma \ref{lemma:ratio_amhm}, we derive the following:
\begin{align}
    \label{eq:th1_softmax}
    \begin{autobreak}
        \softmaxj{k}{i} 
        = \frac{1}{C} + \left(\softmaxj{k}{i} - \frac{1}{C}\right)
        \le \frac{1}{C} + \left(\frac{1 + O(\weightj{k}^\top\featurej{i})}{\sum_{l=1}^C\left( 1 + O(\weightj{l}^\top\featurej{i})\right)} - \frac{1}{C} \right)
        = \frac{1}{C} + \frac{1}{C}\left(\left(1 + O\left(\frac{1}{\lambda \rho C}\right)\right)\left(1-O\left(\frac{1}{\lambda \rho C^2}\right)\right) - 1\right)
        = \frac{1}{C} + O\left(\frac{1}{\lambda \rho C^2}\right).
    \end{autobreak}
\end{align}
Using this, the assumption and (\ref{eq:th1}), we get
\begin{align}
    \begin{autobreak}
        \frac{\partial \wbobj}{\partial \weightj{k}} \le \frac{\overline{N}}{N}\boldsymbol{\mu} + O\left(\frac{1}{\lambda \rho^2 C^2}\right).
    \end{autobreak}
\end{align}
\end{proof}

\begin{proof}[Proof of Theorem \ref{theory:2}]

\begin{align}
        \label{eq:th2_dif}
        \left.\frac{\partial \wbobj}{\partial \weightj{k}}\right|_{\weightj{k} = \weightjhat{k}}
        &=\left.\frac{\partial \wbobj}{\partial \weightj{k}}\right|_{\weightj{k} = \weightjhat{k}}  - \left.\frac{\partial \wbobj}{\partial \weightj{k}}\right|_{\weightj{k} = \weightj{k}^*} \notag\\
        &= \lambda(\weightjhat{k} - \weightj{k}^*) \notag\\
        &+ \frac{\overline{N}}{N} \sum_{j=1}^C \left(\frac{1}{N_{j}} \sum_{(\mathbf{x}_i, j) \in \mathcal{D}_j} \left(\softmaxjhat{k}{i} - \softmaxjstar{k}{i} \right) \featurej{i} \right). 
\end{align}

Here, similar to (\ref{eq:th1_softmax}), the following can be derived;
\begin{align}
        \label{eq:th2_softmax_dif}
        \softmaxjhat{k}{i} - \softmaxjstar{k}{i}  = \frac{1}{C} - \frac{1}{C} + O\left(\frac{1}{\lambda\rho C^2}\right) = O\left(\frac{1}{\lambda\rho C^2}\right). 
\end{align}

Using (\ref{eq:th2_dif}) and (\ref{eq:th2_softmax_dif}), 
\begin{align}
        \lambda(\weightjhat{k} - \weightj{k}^*)
        &= \left.\frac{\partial \wbobj}{\partial \weightj{k}}\right|_{\weightj{k} = \weightjhat{k}} + O\left(\frac{1}{\lambda \rho^2 C^2}\right) \notag\\
        &\le \frac{\overline{N}}{N} \boldsymbol{\mu} + O\left(\frac{1}{\lambda \rho^2 C^2}\right).
\end{align}

Therefore, 
\begin{align}
        \left\|\weightj{k}^* - \weightjhat{k}\right\|_2
        &\le \frac{\overline{N}}{\lambda N} \|\boldsymbol{\mu}\|_2 + O\left(\frac{1}{\lambda^2 \rho^2 C^2}\right).
\end{align}

\end{proof}

\subsection{Settings}\label{app:settings}
Our implementation is based on \citet{alshammari_long-_2022} and \citet{vigneswaran_feature_2021}. For comparison, many of the experimental settings also follow \citet{alshammari_long-_2022}.
\subsubsection{Datasests}
We created validation datasets from the portions of the training datasets because CIFAR10 and CIFAR100 have only training and test data. As with \citet{liu_large-scale_2019}, only $20$ samples per class were taken from the training dataset to compose the validation dataset, and the training dataset was composed of the rest of the data. We set $N_1$ to $4980$ for CIFAR10 and $480$ for CIFAR100. In our experiments, we set imbalance factor $\rho$ to $100$.  We call the class $k$ \textit{Many} if the number of training samples satisfies $1000<N_k\le 4980$ (resp. $100 < N_k \le 480$), \textit{Medium} if the number of training samples fullfills $200 \le N_k \le 1000$ (resp. $20 \le N_k \le 100$), and \textit{Few} otherwise for CIFAR10-LT (resp. CIFAR100-LT). For mini-ImageNet-LT and ImageNet-LT, the same applies as for CIFAR100-LT.

\subsubsection{Models}
As for MLPs, one module block consists of three layers: a linear layer outputting $1024$ dimensional features, a BN layer, and a ReLU layer. The layers are stacked in sequence and the blocks composed of them are as well. As for ResBlocks, each block has the same structure as \citet{he_deep_2016}, except that the linear layer outputs $1024$ dimensional features. These blocks are combined into a sequential, and at the bottom of it, we further concatenate an MLP block. In other words, the input first passes through the linear layer, BN, and ReLU before flowing into the residual blocks. In both cases, a classifier consisting of a linear layer and a softmax activation layer is on the top and does not count as one block. For example, in MLP3, the features pass through the three blocks and the classifier in sequence.

\subsubsection{Evaluation metrics}
Unless otherwise noted, we used the following values for hyperparameters for the ResNet. The optimizer was SGD with $\text{momentum}=0.9$ and cosine learning rate scheduler \citep{loshchilov_sgdr_2017} to gradually decrease the learning rate from $0.01$ to $0$. The batch size was $64$, and the number of epochs was $320$ for the first stage and $10$ for the second stage. As loss functions, we used naive CE and CB. As for CB, we used class-balanced CE with $\beta=0.9999$. We set $\lambda$ of WD to $0.005$ in the first stage and $0.1$ in the second stage. For mini-ImageNet-LT, we set $\lambda$ for the first stage to $0.003$. We set $\zeta$ of FR to $0.01$. We calculated the MaxNorm's threshold $\eta$ as in \citet{alshammari_long-_2022}. We searched the optimal $\tau$ and $\gamma$ for the LA by cross-validation using the validation data. We chose the optimal value of $\tau$ from $\{1.00, 1.05, \ldots , 2.00\}$ and $\gamma$ from $\{0.00, 0.05, \ldots,, 1.00\}$. 

For ImageNet-LT, We searched hyperparameters using the validation dataset for the ones not published in \citet{alshammari_long-_2022} and used the following values. We reduced the learning rate gradually from $0.05$ to $0$. The number of epochs was $200$ for the first stage. We set $\lambda$ of WD to $0.00024$ in the first stage and $0.00003$ in the second stage. We set $\zeta$ of FR to $0.0001$. The other hyperparameters were the same as in CIFAR100-LT.
 
We trained MLPs and ResBlocks with $\lambda$ set to $0.01$ and the number of epochs set to $150$. Other parameters were the same as above. FDRs and accuracy reported in our experiments were obtained by averaging the results from five training runs with different random seeds. We conducted experiments on an NVIDIA A100.

\subsubsection{Why we do not use DR loss}
\begin{table}
    \centering
    \caption{FDRs and accuracy of models trained with an ETF classifier for CIFAR100-LT.}
    \label{tab:exp5_accfdr5_dr}
    \begin{tabular}{lcccccc}
    \toprule
    & \multicolumn{2}{c}{FDR} &\multicolumn{4}{c}{Accuracy (\%)} \\  \cmidrule(rl){2-3} \cmidrule(l){4-7}
    Method          & Train & Test & \textit{Many}  & \textit{Medium}     & \textit{Few} & Average  \\ \cmidrule(r){1-1} \cmidrule(rl){2-3}  \cmidrule(l){4-7}
    WD\&ETF           & $3.33 \!\!\times\!\! 10^4_{\pm2.3 \times 10^3}$ &  $\mathbf{1.13 \!\!\times\!\! 10^2_{\pm0.1 \times 10^1}}$ &  $\mathbf{76.3_{\pm0.3}}$ & $\mathbf{46.0_{\pm0.4}}$ & $15.5_{\pm0.6}$ & $\mathbf{46.9_{\pm0.2}}$ \\
    WD\&ETF\&DR       & $\mathbf{8.25 \!\!\times\!\! 10^5_{\pm7.0 \times 10^4}}$ &  $9.46 \!\!\times\!\! 10^1_{\pm0.8 \times 10^0}$ &  $72.9_{\pm0.2}$ & $43.0_{\pm0.6}$ & $\mathbf{20.0_{\pm0.9}}$ & $46.0_{\pm0.4}$ \\
    \bottomrule
    \end{tabular}
\end{table}
This section explains why we use ETF classifiers but not the dot-regression loss (DR) proposed in \citet{yang_inducing_2022}.
Table \ref{tab:exp5_accfdr5_dr} compares FDRs and accuracy when we use WD and an ETF Classifier for the linear layer with two options, using CE or DR for the loss. This table shows that CE is superior in the test FDR and average accuracy. Theorem \ref{theory:cone_effect} also claims that training by CE and WD prevents the cone effect. It does not apply to DR, which is a squared loss function. Thus, it is difficult to prove whether DR is equally effective in preventing the cone effect.
 
\subsection{Experiments}\label{app:experiments}

\subsubsection{Experiments of Sec.\ref{sec:4_wd_degrades_cosisim}}
\begin{figure}
\centering
\includegraphics[width=1.0\linewidth]{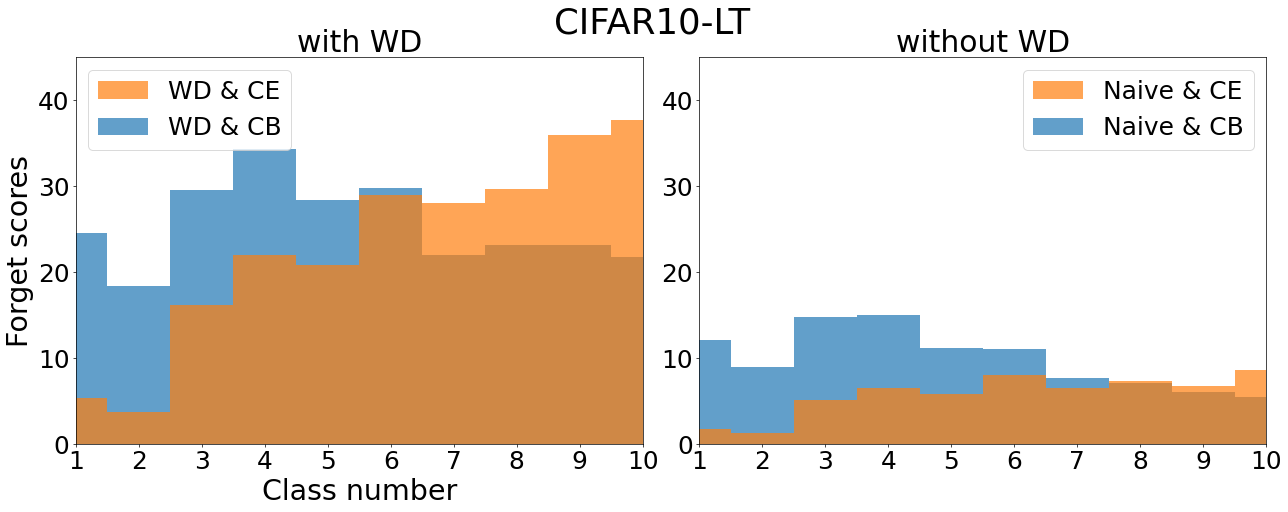} \\
\includegraphics[width=1.0\linewidth]{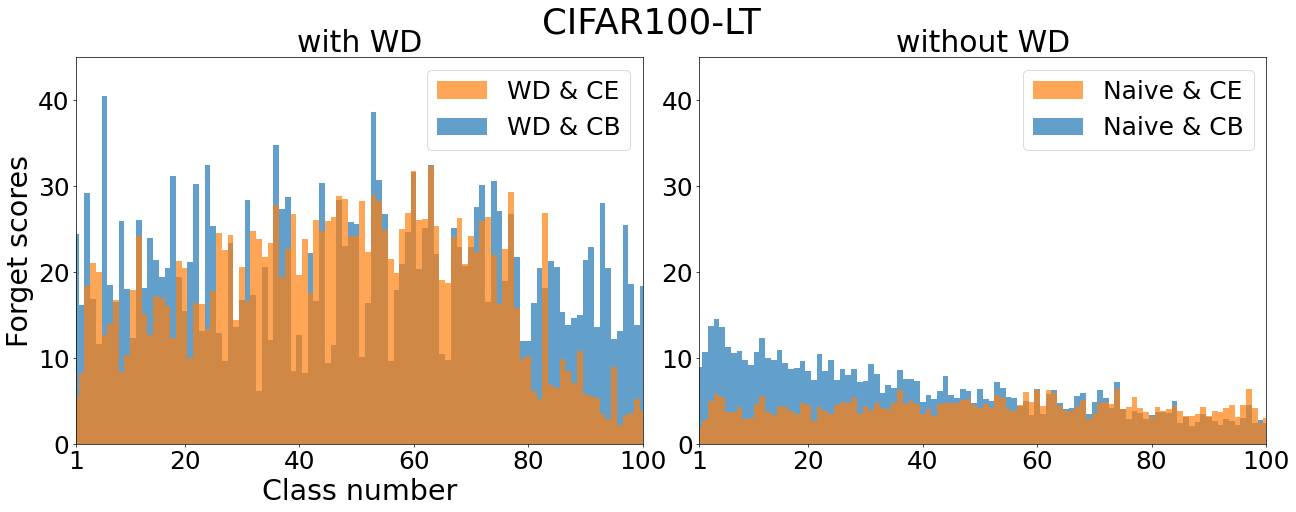} \\
\includegraphics[width=1.0\linewidth]{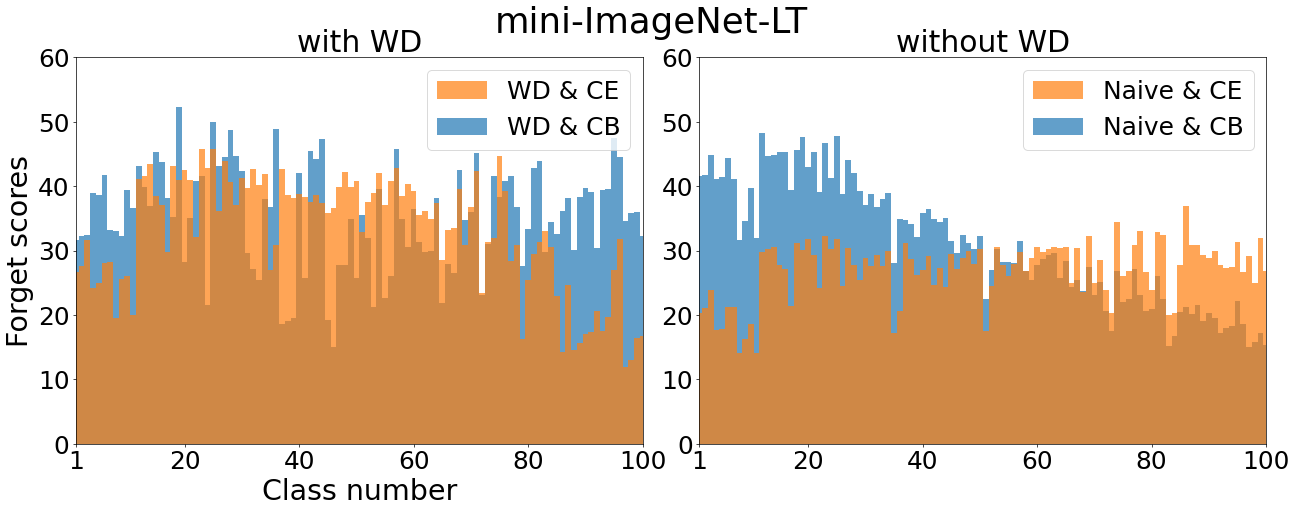} \\
\caption{Average forgetting scores per class when models are trained with each method. These indicate higher forgetting scores when the models are trained with CB; this is particularly noticeable in the \textit{Many} classes without WD.}
\label{fig:exp1_fc}
\end{figure}
The right half of Figure \ref{fig:exp1_norm_cos_mimage} show the result of Sec.\ref{sec:4_wd_degrades_cosisim} for mini-ImageNet-LT. We also examined the forget score \citep{toneva_empirical_2019} of each method in Sec.\ref{sec:4_wd_degrades_cosisim}. Phenomena similar to the ones we showed in Sec.\ref{sec:4_wd_degrades_cosisim} can be observed in the forgetting score of features trained with CB: Figure \ref{fig:exp1_fc} shows that the forgetting score is higher on average with CB than with CE. In the absence of WD, this phenomenon is particularly evident for the \textit{Many} classes, possibly because CB gives each image in \textit{Many} classes less weight during training, which inhibits learning. These may be some of the reasons for the poor accuracy of CB compared with CE in training feature extractors reported by \citet{kang_decoupling_2020}.

\begin{figure}
\begin{center}
\begin{minipage}[b]{0.5\linewidth}
\includegraphics[width=1.0\linewidth]{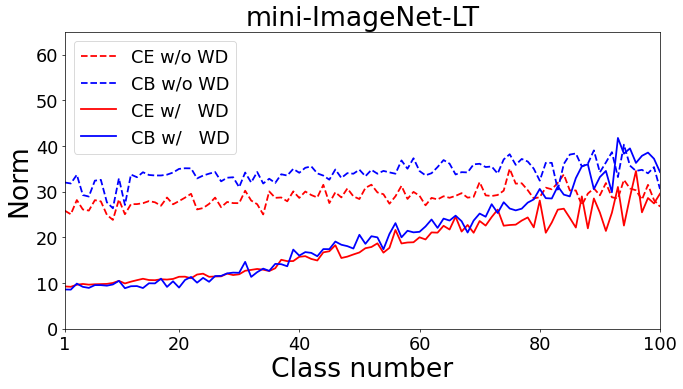}
\end{minipage}
\begin{minipage}[b]{0.45\linewidth}
\includegraphics[width=1.0\linewidth]{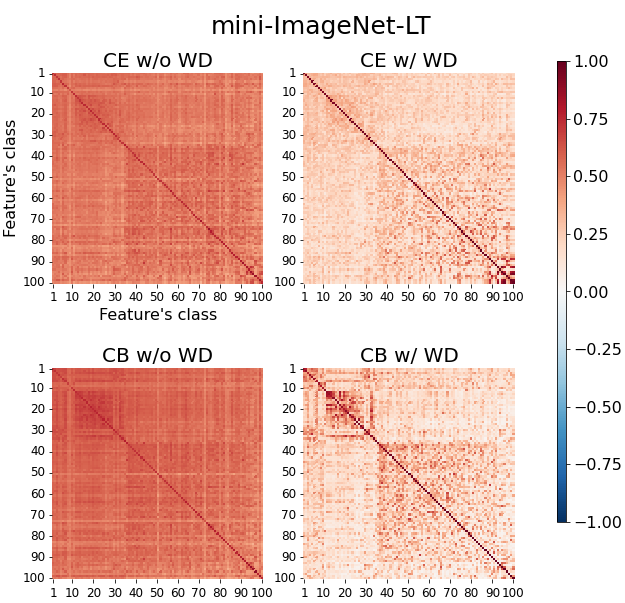}

\end{minipage}

\caption{Results for mini-ImageNet-LT. (Left) Norm of mean per-class training features produced from models trained with each
method. (Right) Heatmaps showing average cosine similarities of training features between two
classes.}
\label{fig:exp1_norm_cos_mimage}
\end{center}
\end{figure}

\begin{figure}
\begin{center}
\begin{minipage}[b]{0.48\linewidth}
\includegraphics[width=1.0\linewidth]{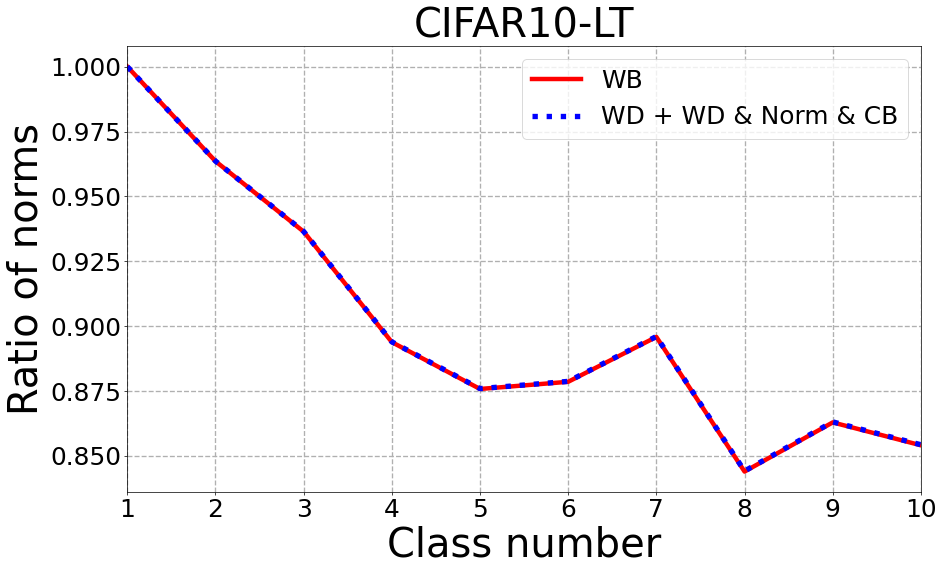}
\end{minipage}
\begin{minipage}[b]{0.48\linewidth}
\includegraphics[width=1.0\linewidth]{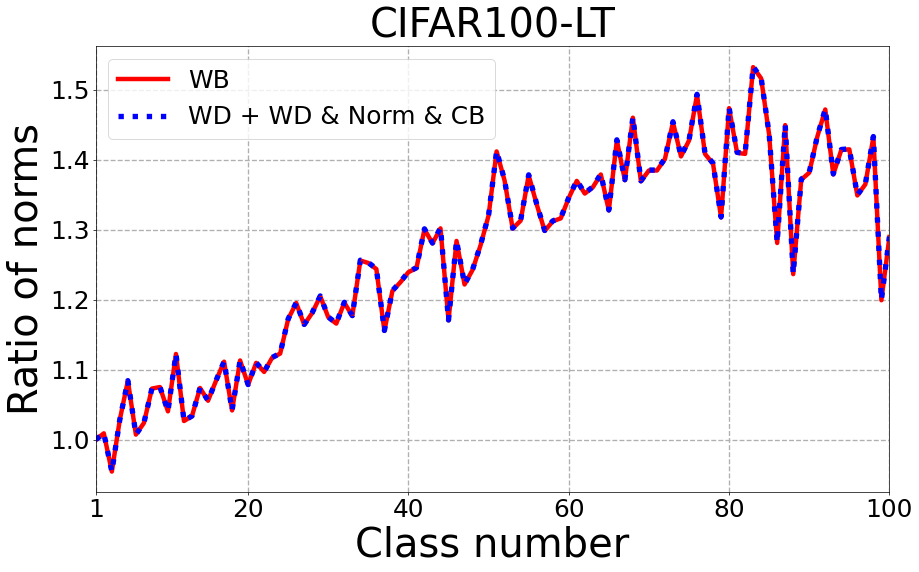}
\end{minipage} \\
\begin{minipage}[b]{0.48\linewidth}
\includegraphics[width=1.0\linewidth]{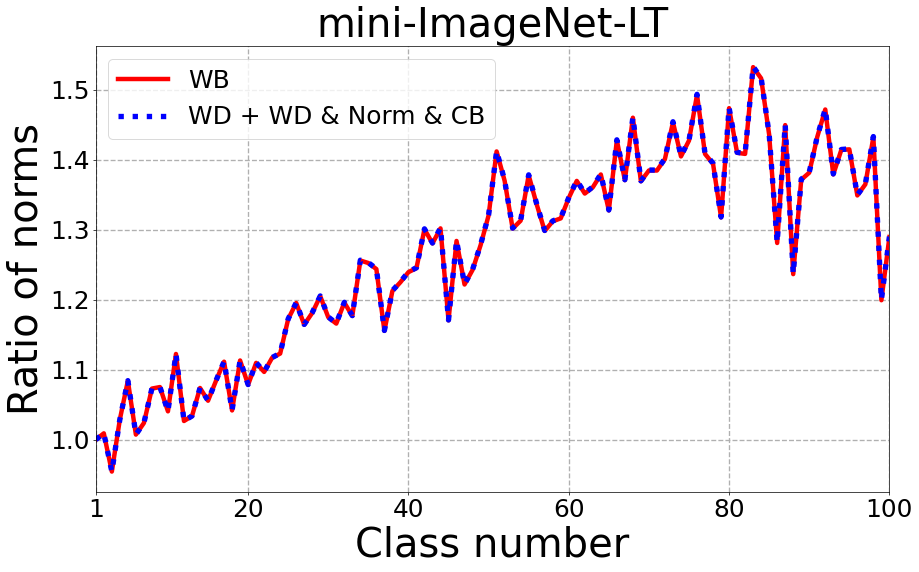}
\end{minipage}
\caption{Norm ratio of mean per-class training features produced from models trained with each dataset and each method. Note that the vertical axis shows the ratio of the norm of the weights for each class with the one for the class of which sample size is the largest. Models trained with both methods have almost identical linear layer norms.}
\label{fig:exp11_norm}

\end{center}
\end{figure}

\subsubsection{Experiments of Sec. \ref{sec:4.4_decrease_scaling_of_bn}}

\paragraph{High relative variance in shifting parameters of BN temporarily degrades FDR} In Sec.\ref{sec:4.4_decrease_scaling_of_bn}, we investigated the effect of the small scaling parameters of BN. What then is the impact of higher relative variance in the shifting parameters of BN? Our experiments have shown that they temporarily worsen the FDR in Figure \ref{fig:exp10_FDR}. In the experiments, we trained MLPs with MNIST using three different methods: without WD (Naive), with WD (WD all), and with WD except for any shifting parameters of BN (WD w/o shift). We retrieved the output features of untrained and these trained models for each layer, applied ReLU to them, and examined their FDRs. We refer to these as ReLUFDRs. Figure \ref{fig:exp10_FDR} shows the increase and decrease in the ReLUFDRs of the intermediate outputs from the MLPs trained with each method. This figure indicates that the ReLUFDRs of the models trained without WD monotonically and gradually increases as the features pass through the layers. However, this is not the case for the models trained with WD. In particular, ReLUFDRs decrease drastically when we train the shifting parameters and apply the scaling and shifting parameters of the BN in the blocks close to the last layer. In this case, the ReLUFDRs increase when the features pass through the linear layer more than the ReLUFDRs decrease when the features pass through the BN later. These results suggest that applying WD to the scaling and shifting parameters of BN have a positive effect on training of the linear layer. 

\newpage

\begin{figure}
\begin{center}
\begin{minipage}[b]{0.325\linewidth}
\includegraphics[width=1.0\linewidth]{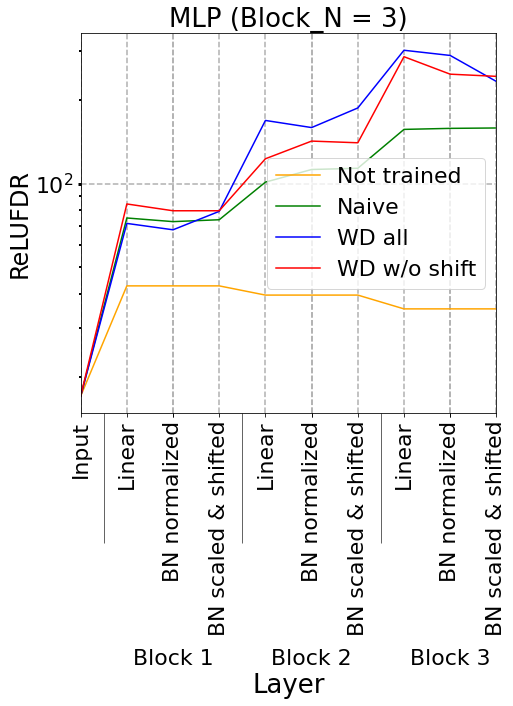}
\end{minipage}
\begin{minipage}[b]{0.325\linewidth}
\includegraphics[width=1.0\linewidth]{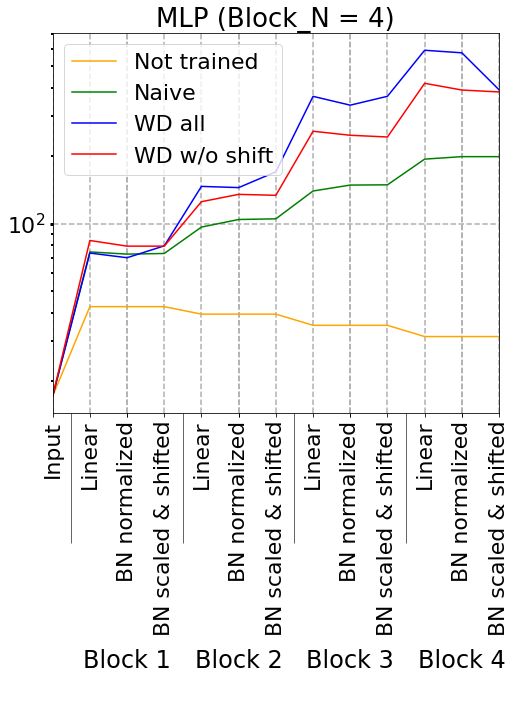}
\end{minipage}
\begin{minipage}[b]{0.325\linewidth}
\includegraphics[width=1.0\linewidth]{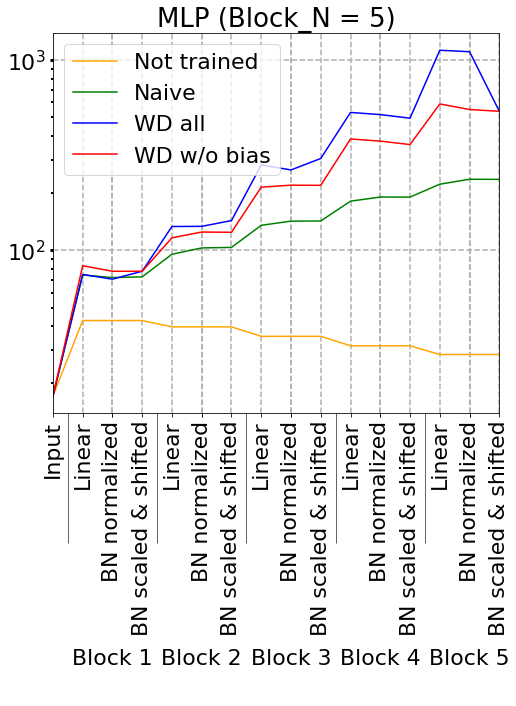}
\end{minipage}
\caption{ReLUFDR of intermediate outputs of each MLP trained with each method. Training with WD decreases ReLUFDR significantly when features pass through the scaling and shifting layer in the BN of the final layer.}
\label{fig:exp10_FDR}
\end{center}
\end{figure}

\subsubsection{Experiments of Sec. \ref{sec:4_improve_fdr_by_themselves}}
This subsection describes the experimental setup of Sec. \ref{sec:4_improve_fdr_by_themselves} and additional experiments. We trained models for the balanced datasets using the same parameters settings as in Sec. \ref{sec:4_wd_degrades_cosisim}. The FDRs of "before" are measured from the features obtained by models trained in the same way as the experiments in Sec. \ref{sec:4_wd_degrades_cosisim}. These features are passed through an extra linear layer, and FDRs of them are measured as the "after". We initialized the weight $\left(\in \mathbb{R}^{d\times d}\right)$ and bias $\left(\in \mathbb{R}^{d\times 1}\right)$ of the extra linear layer with random values following a uniform distribution in $\left[-\frac{1}{\sqrt{d}}, \frac{1}{\sqrt{d}}\right]$ and fixed them.

Table \ref{tab:random_fdr} shows the relationship between the FDRs and numbers of times features from MLP5 are applied by randomly initialized linear layers and ReLUs. The FDRs increase the first time, but gradually decrease the second and subsequent times.

\begin{table}
\begin{center}
    \caption{Change in FDR as learned features pass through randomized linear layers and ReLUs multiple times. Red text indicates an increase compared to the FDR of the feature when the number of layers passed is one less, while blue text indicates a decrease.}
    \label{tab:random_fdr}
    \begin{tabular}{lcccccc}
    \toprule
                & \multicolumn{4}{c}{Number of times through randomized linear layers}  \\ \cmidrule(l){2-5}
    Method      & 0 & 1 & 2 & 3 &  \\ \cmidrule(r){1-1} \cmidrule(l){2-5}
     CE & $2.44 \times 10^2$ &  \textcolor{blue}{$2.37 \times 10^2$}&  \textcolor{blue}{$2.17 \times 10^2$}&  \textcolor{blue}{$1.96 \times 10^2$}  \\
    WD w/o BN & $4.91 \times 10^2$ &  \textcolor{red}{$6.76 \times 10^2$}&  \textcolor{blue}{$6.56 \times 10^2$}&  \textcolor{blue}{$6.22 \times 10^2$} \\
    WD & $6.10 \times 10^2$ &  \textcolor{red}{$2.47 \times 10^3$}&  \textcolor{blue}{$2.34 \times 10^3$}&  \textcolor{blue}{$1.60 \times 10^3$} \\

    \bottomrule
    \end{tabular}
\end{center}
\end{table}

\subsubsection{Experiments of Sec. \ref{sec:4_implicit_la}}\label{app:no_need_for_mn}
First, we experimentally show MaxNorm is not necessary for the second stage of WB. In the second training stage in WB, we observed that normalizing the norm of the weights to one before training (WD + WD \& Norm \& CB) instead of applying MaxNorm every epoch gives almost identical results as shown in Figure \ref{fig:exp11_norm}. This phenomenon is consistent with the fact that the regularization loses its effect beyond a certain epoch \citep{golatkar_time_2019}.

Figure \ref{fig:exp11_norm} also shows that WB has difficulty in doing implicit LA when the number of classes is small. While the weights of the linear layer are larger for \textit{Few} classes in the datasets with a sufficiently large number of classes, they remain lower for \textit{Few} classes in CIFAR10-LT. These results are consistent with the conclusions drawn from Theorem \ref{theory:wb}.

\subsubsection{Experiments of Sec.\ref{sec:4.3_proposed_method}}
We show the FDRs and accuracy of methods with restricted WD for CIFAR100-LT in Table \ref{tab:exp5_accfdr4}. Table \ref{tab:exp5_accfdr2}, \ref{tab:exp5_accfdr3}, and Table \ref{tab:exp5_accfdr6_imagenet} show the results for CIFAR10-LT, mini-ImageNet-LT, and ImageNet-LT respectively. We set $\zeta$ for CIFAR10-LT, mini-ImageNet-LT, and ImageNet-LT to 0.02, 0.001, and 0.0001 respectively.

We also experimented with the ResNeXt50 \citep{xie_aggregated_2017} as with \citet{alshammari_long-_2022}. We used the same hyperparameters as the ResNet. Table \ref{tab:exp5_accfdr6_next} presents the results for CIFAR100-LT.

\begin{table}
    \centering
    \caption{FDRs and accuracy of models trained with restricted WD and ETF Classifier for CIFAR100-LT.}
    \label{tab:exp5_accfdr4}
    \begin{tabular}{llcccccc}
    \toprule
    & & \multicolumn{2}{c}{FDR} &\multicolumn{4}{c}{Accuracy (\%)} \\  \cmidrule(rl){3-4} \cmidrule(l){5-8}
    Method          & LA  & Train & Test & \textit{Many}  & \textit{Medium}     & \textit{Few} & Average  \\ \cmidrule(r){1-2} \cmidrule(rl){3-4}  \cmidrule(l){5-8}
            & N/A      & $1.94 \!\!\times\!\! 10^2_{\pm0.8 \times 10^1}$ &  $5.89 \!\!\times\!\! 10^1_{\pm1.0 \times 10^0}$ &  $\mathbf{74.6_{\pm0.5}}$ & $44.4_{\pm0.7}$ & $13.4_{\pm0.6}$ & $45.1_{\pm0.5}$  \\
            & Add & $1.94 \!\!\times\!\! 10^2_{\pm0.8 \times 10^1}$ &  $5.89 \!\!\times\!\! 10^1_{\pm1.0 \times 10^0}$ &  $69.3_{\pm0.8}$ & $49.4_{\pm0.5}$ & $\mathbf{30.4_{\pm0.9}}$ & $\mathbf{50.3_{\pm0.6}}$  \\
    \multirow{-3}{*}{\begin{tabular}{l}WD \\ w/o BN \\ \& ETF  \end{tabular}} & Mult  & $1.94 \!\!\times\!\! 10^2_{\pm0.8 \times 10^1}$ &  $5.89 \!\!\times\!\! 10^1_{\pm1.0 \times 10^0}$ &  $67.6_{\pm0.9}$ & $\mathbf{51.0_{\pm0.5}}$ & $\mathbf{31.5_{\pm1.3}}$ & $\mathbf{50.6_{\pm0.7}}$  \\ \rowcolor[gray]{0.85} %
            & N/A      & $\mathbf{5.47 \!\!\times\!\! 10^4_{\pm8.0 \times 10^3}}$ &  $\mathbf{1.05 \!\!\times\!\! 10^2_{\pm0.2 \times 10^1}}$ &  $\mathbf{75.2_{\pm0.6}}$ & $45.4_{\pm0.9}$ & $16.2_{\pm0.4}$ & $46.5_{\pm0.1}$  \\ \rowcolor[gray]{0.85} %
            & Add & $\mathbf{5.42 \!\!\times\!\! 10^4_{\pm8.0 \times 10^3}}$ &  $\mathbf{1.05 \!\!\times\!\! 10^2_{\pm0.2 \times 10^1}}$ &  $71.9_{\pm0.6}$ & $44.8_{\pm0.6}$ & $26.5_{\pm0.5}$ & $48.4_{\pm0.3}$  \\ \rowcolor[gray]{0.85} %
    \multirow{-3}{*}{\begin{tabular}{l}WD \\ fixed BN \\ \&ETF\end{tabular}} & Mult  & $\mathbf{5.42 \!\!\times\!\! 10^4_{\pm8.0 \times 10^3}}$ &  $\mathbf{1.05 \!\!\times\!\! 10^2_{\pm0.2 \times 10^1}}$ &  $72.2_{\pm0.8}$ & $45.7_{\pm0.5}$ & $26.9_{\pm0.4}$ & $48.9_{\pm0.3}$  \\
    \bottomrule
    \end{tabular}
\end{table}

\begin{table}
    \caption{FDRs and accuracy of models trained with each method for CIFAR10-LT. }
    \centering
    \label{tab:exp5_accfdr2}
    \begin{tabular}{llcccccc}
    \toprule
    & & \multicolumn{2}{c}{FDR} &\multicolumn{4}{c}{Accuracy (\%)} \\  \cmidrule(rl){3-4} \cmidrule(l){5-8}
    Method          & LA  & Train & Test & \textit{Many}  & \textit{Medium}     & \textit{Few} & Average  \\ \cmidrule(r){1-2} \cmidrule(rl){3-4}  \cmidrule(l){5-8}
    CE           & N/A   & $8.17 \!\!\times\!\! 10^1_{\pm1.17 \times 10^1}$ &  $2.17 \!\!\times\!\! 10^1_{\pm0.6 \times 10^0}$ &  $87.7_{\pm0.3}$ & $67.2_{\pm3.1}$ & $47.4_{\pm2.2}$ & $69.4_{\pm1.0}$  \\ \rowcolor[gray]{0.85} %
    CB              & N/A   & $4.43 \!\!\times\!\! 10^1_{\pm1.21 \times 10^1}$ &  $1.50 \!\!\times\!\! 10^1_{\pm0.9 \times 10^0}$ &  $81.7_{\pm1.3}$ & $60.6_{\pm2.0}$ & $42.0_{\pm4.7}$ & $63.5_{\pm1.3}$   \\
                & N/A        & $2.97 \!\!\times\!\! 10^3_{\pm3.38 \times 10^3}$ &  $4.21 \!\!\times\!\! 10^1_{\pm1.7 \times 10^0}$ &  $\mathbf{89.1_{\pm1.8}}$ & $76.6_{\pm1.7}$ & $61.5_{\pm5.0}$ & $77.1_{\pm1.0}$   \\ 
               & Add  &$3.40 \!\!\times\!\! 10^3_{\pm4.32 \times 10^3}$ &  $4.21 \!\!\times\!\! 10^1_{\pm1.7 \times 10^0}$ &  $81.6_{\pm3.8}$ & $\mathbf{79.5_{\pm1.2}}$ & $\mathbf{79.7_{\pm3.1}}$ & $\mathbf{80.4_{\pm0.7}}$  \\   
    \multirow{-3}{*}{WD}& Mult & $3.40 \!\!\times\!\! 10^3_{\pm4.32 \times 10^3}$ &  $4.21 \!\!\times\!\! 10^1_{\pm1.7 \times 10^0}$ &  $86.2_{\pm2.9}$ & $\mathbf{80.3_{\pm1.0}}$ & $\mathbf{74.0_{\pm3.9}}$ & $\mathbf{80.8_{\pm0.5}}$   \\ \rowcolor[gray]{0.85} %
    WB              & N/A & $1.82 \!\!\times\!\! 10^3_{\pm2.58 \times 10^3}$ &  $3.86 \!\!\times\!\! 10^1_{\pm4.5 \times 10^0}$ &  $\mathbf{87.9_{\pm2.4}}$ & $77.6_{\pm1.5}$ & $67.9_{\pm3.2}$ & $78.8_{\pm0.6}$ \\  
            & N/A        & $3.90 \!\!\times\!\! 10^4_{\pm4.57 \times 10^4}$ &  $\mathbf{4.38 \!\!\times\!\! 10^1_{\pm3.2 \times 10^0}}$ &  $\mathbf{89.7_{\pm1.9}}$ & $\mathbf{73.9_{\pm5.6}}$ & $59.1_{\pm1.8}$ & $75.8_{\pm1.5}$   \\ 
               & Add  & $3.59 \!\!\times\!\! 10^4_{\pm4.18 \times 10^4}$ &  $\mathbf{4.38 \!\!\times\!\! 10^1_{\pm3.2 \times 10^0}}$ &  $\mathbf{87.2_{\pm3.4}}$ & $\mathbf{77.1_{\pm5.1}}$ & $72.5_{\pm2.6}$ & $\mathbf{79.8_{\pm1.1}}$   \\   
    \multirow{-3}{*}{WD\&ETF}& Mult & $3.59 \!\!\times\!\! 10^4_{\pm4.18 \times 10^4}$ &  $\mathbf{4.38 \!\!\times\!\! 10^1_{\pm3.2 \times 10^0}}$ &  $\mathbf{86.9_{\pm4.2}}$ & $\mathbf{79.0_{\pm4.4}}$ & $73.2_{\pm2.9}$ & $\mathbf{80.4_{\pm0.8}}$  \\ \rowcolor[gray]{0.85} %
             & N/A & $\mathbf{5.56 \!\!\times\!\! 10^5_{\pm4.29 \times 10^5}}$ &  $\mathbf{4.78 \!\!\times\!\! 10^1_{\pm2.0 \times 10^0}}$ &  $\mathbf{90.9_{\pm0.6}}$ & $76.7_{\pm1.4}$ & $56.2_{\pm2.0}$ & $76.2_{\pm0.4}$   \\ \rowcolor[gray]{0.85} %
     & Add & $\mathbf{7.58 \!\!\times\!\! 10^5_{\pm6.06 \times 10^5}}$ &  $\mathbf{4.78 \!\!\times\!\! 10^1_{\pm2.0 \times 10^0}}$ &  $88.8_{\pm1.1}$ & $\mathbf{78.7_{\pm1.2}}$ & $70.3_{\pm2.1}$ & $\mathbf{80.2_{\pm0.5}}$  \\ \rowcolor[gray]{0.85} %
    \multirow{-3}{*}{\begin{tabular}{l} WD\&FR \\ \&ETF  \end{tabular}} & Mult & $\mathbf{7.58 \!\!\times\!\! 10^5_{\pm6.06 \times 10^5}}$ &  $\mathbf{4.78 \!\!\times\!\! 10^1_{\pm2.0 \times 10^0}}$ &  $\mathbf{89.9_{\pm0.9}}$ & $\mathbf{80.8_{\pm1.3}}$ & $66.3_{\pm2.1}$ & $\mathbf{80.1_{\pm0.4}}$   \\
    \midrule
            & N/A      & $9.17 \!\!\times\!\! 10^1_{\pm3.60 \times 10^1}$ &  $2.95 \!\!\times\!\! 10^1_{\pm2.8 \times 10^0}$ &  $88.3_{\pm1.4}$ & $75.6_{\pm3.6}$ & $59.1_{\pm1.8}$ & $75.7_{\pm1.6}$ \\
            & Add & $9.26 \!\!\times\!\! 10^1_{\pm3.66 \times 10^1}$ &  $2.95 \!\!\times\!\! 10^1_{\pm2.8 \times 10^0}$ &  $85.9_{\pm2.5}$ & $\mathbf{79.0_{\pm3.2}}$ & $74.3_{\pm1.5}$ & $\mathbf{80.3_{\pm1.5}}$  \\
    \multirow{-3}{*}{\begin{tabular}{l}WD \\ w/o BN \\ \& ETF  \end{tabular}} & Mult  & $9.26 \!\!\times\!\! 10^1_{\pm3.66 \times 10^1}$ &  $2.95 \!\!\times\!\! 10^1_{\pm2.8 \times 10^0}$ &  $85.5_{\pm2.5}$ & $\mathbf{79.0_{\pm3.2}}$ & $\mathbf{75.8_{\pm1.0}}$ & $\mathbf{80.7_{\pm1.8}}$   \\ \rowcolor[gray]{0.85} %
            & N/A      & $7.02 \!\!\times\!\! 10^3_{\pm4.59 \times 10^3}$ &  $4.29 \!\!\times\!\! 10^1_{\pm2.8 \times 10^0}$ &  $\mathbf{89.5_{\pm1.9}}$ & $74.3_{\pm3.2}$ & $58.9_{\pm5.8}$ & $75.7_{\pm1.3}$  \\ \rowcolor[gray]{0.85} %
            & Add & $6.60 \!\!\times\!\! 10^3_{\pm4.28 \times 10^3}$ &  $4.29 \!\!\times\!\! 10^1_{\pm2.8 \times 10^0}$ &  $79.3_{\pm9.2}$ & $\mathbf{76.9_{\pm3.6}}$ & $\mathbf{78.7_{\pm5.4}}$ & $\mathbf{78.4_{\pm3.2}}$  \\ \rowcolor[gray]{0.85} %
    \multirow{-3}{*}{\begin{tabular}{l}WD \\ fixed BN \\ \&ETF\end{tabular}} & Mult  & $6.60 \!\!\times\!\! 10^3_{\pm4.28 \times 10^3}$ &  $4.29 \!\!\times\!\! 10^1_{\pm2.8 \times 10^0}$ &  $\mathbf{84.5_{\pm6.9}}$ & $\mathbf{78.5_{\pm4.1}}$ & $\mathbf{77.1_{\pm5.6}}$ & $\mathbf{80.5_{\pm2.6}}$  \\
    \bottomrule
    \end{tabular}
\end{table}

\begin{center}
\begin{table}
    \caption{FDRs and accuracy of models trained with each method for mini-ImageNet-LT. }
    \label{tab:exp5_accfdr3}
    \begin{tabular}{llcccccc}
    \toprule
    & & \multicolumn{2}{c}{FDR} &\multicolumn{4}{c}{Accuracy (\%)} \\  \cmidrule(rl){3-4} \cmidrule(l){5-8}
    Method          & LA  & Train & Test & \textit{Many}  & \textit{Medium}     & \textit{Few} & Average  \\ \cmidrule(r){1-2} \cmidrule(rl){3-4}  \cmidrule(l){5-8}
    CE           & N/A   & $7.56 \!\!\times\!\! 10^1_{\pm1.6 \times 10^0}$ &  $4.28 \!\!\times\!\! 10^1_{\pm0.3 \times 10^0}$ &  $72.1_{\pm0.5}$ & $34.1_{\pm0.6}$ & $18.3_{\pm0.5}$ & $42.7_{\pm0.2}$  \\ \rowcolor[gray]{0.85} %
    CB              & N/A   & $4.68 \!\!\times\!\! 10^1_{\pm0.7 \times 10^0}$ &  $2.93 \!\!\times\!\! 10^1_{\pm0.2 \times 10^0}$ &  $59.4_{\pm0.5}$ & $25.0_{\pm0.4}$ & $15.7_{\pm0.4}$ & $34.3_{\pm0.2}$  \\ 
                & N/A        & $6.58 \!\!\times\!\! 10^2_{\pm2.8 \times 10^1}$ &  $1.01 \!\!\times\!\! 10^2_{\pm0.1 \times 10^1}$ &  $\mathbf{81.7_{\pm0.4}}$ & $\mathbf{43.3_{\pm0.5}}$ & $20.3_{\pm0.7}$ & $49.8_{\pm0.3}$  \\ 
               & Add  &$6.51 \!\!\times\!\! 10^2_{\pm3.3 \times 10^1}$ &  $1.01 \!\!\times\!\! 10^2_{\pm0.1 \times 10^1}$ &  $80.5_{\pm0.3}$ & $42.1_{\pm0.6}$ & $39.3_{\pm0.2}$ & $\mathbf{54.7_{\pm0.3}}$  \\   
    \multirow{-3}{*}{WD}& Mult & $6.51 \!\!\times\!\! 10^2_{\pm3.3 \times 10^1}$ &  $1.01 \!\!\times\!\! 10^2_{\pm0.1 \times 10^1}$ &  $79.9_{\pm0.2}$ & $41.6_{\pm0.8}$ & $40.3_{\pm0.3}$ & $\mathbf{54.6_{\pm0.4}}$  \\ \rowcolor[gray]{0.85} %
    WB              & N/A & $6.52 \!\!\times\!\! 10^2_{\pm3.0 \times 10^1}$ &  $1.00 \!\!\times\!\! 10^2_{\pm0.1 \times 10^1}$ &  $80.1_{\pm0.2}$ & $\mathbf{43.9_{\pm0.7}}$ & $38.1_{\pm0.5}$ & $\mathbf{54.8_{\pm0.3}}$  \\  
            & N/A        & $8.20 \!\!\times\!\! 10^2_{\pm6.5 \times 10^1}$ &  $1.09 \!\!\times\!\! 10^2_{\pm0.2 \times 10^1}$ &  $\mathbf{81.7_{\pm0.7}}$ & $\mathbf{42.9_{\pm0.4}}$ & $21.6_{\pm1.2}$ & $50.1_{\pm0.2}$   \\ 
               & Add  & $8.27 \!\!\times\!\! 10^2_{\pm8.5 \times 10^1}$ &  $1.09 \!\!\times\!\! 10^2_{\pm0.2 \times 10^1}$ &  $79.6_{\pm0.8}$ & $\mathbf{43.0_{\pm0.6}}$ & $37.5_{\pm0.7}$ & $54.1_{\pm0.2}$  \\   
    \multirow{-3}{*}{WD\&ETF}& Mult & $8.27 \!\!\times\!\! 10^2_{\pm8.5 \times 10^1}$ &  $1.09 \!\!\times\!\! 10^2_{\pm0.2 \times 10^1}$ &  $78.8_{\pm1.1}$ & $\mathbf{42.7_{\pm1.0}}$ & $39.4_{\pm1.0}$ & $\mathbf{54.3_{\pm0.3}}$  \\ \rowcolor[gray]{0.85} %
             & N/A & $\mathbf{1.32 \!\!\times\!\! 10^3_{\pm1.2 \times 10^2}}$ &  $1.20 \!\!\times\!\! 10^2_{\pm0.2 \times 10^1}$ &  $\mathbf{81.7_{\pm0.6}}$ & $\mathbf{43.1_{\pm0.7}}$ & $20.8_{\pm0.7}$ & $49.9_{\pm0.5}$  \\ \rowcolor[gray]{0.85} %
     & Add & $\mathbf{1.31 \!\!\times\!\! 10^3_{\pm1.1 \times 10^2}}$ &  $1.20 \!\!\times\!\! 10^2_{\pm0.2 \times 10^1}$ &  $79.6_{\pm0.5}$ & $\mathbf{43.5_{\pm0.6}}$ & $34.9_{\pm0.9}$ & $53.6_{\pm0.6}$  \\ \rowcolor[gray]{0.85} %
    \multirow{-3}{*}{\begin{tabular}{l} WD\&FR \\ \&ETF  \end{tabular}} & Mult & $\mathbf{1.31 \!\!\times\!\! 10^3_{\pm1.1 \times 10^2}}$ &  $1.20 \!\!\times\!\! 10^2_{\pm0.2 \times 10^1}$ &  $78.8_{\pm0.3}$ & $\mathbf{43.5_{\pm0.8}}$ & $37.9_{\pm0.8}$ & $\mathbf{54.2_{\pm0.5}}$  \\
    \midrule
            & N/A      & $1.21 \!\!\times\!\! 10^2_{\pm0.1 \times 10^1}$ &  $5.85 \!\!\times\!\! 10^1_{\pm0.3 \times 10^0}$ &  $78.6_{\pm0.4}$ & $41.7_{\pm0.6}$ & $21.7_{\pm0.7}$ & $48.6_{\pm0.3}$  \\
            & Add & $1.21 \!\!\times\!\! 10^2_{\pm0.1 \times 10^1}$ &  $5.85 \!\!\times\!\! 10^1_{\pm0.3 \times 10^0}$ &  $75.1_{\pm0.2}$ & $40.3_{\pm0.7}$ & $\mathbf{41.7_{\pm0.6}}$ & $52.9_{\pm0.3}$  \\
    \multirow{-3}{*}{\begin{tabular}{l}WD \\ w/o BN \\ \& ETF  \end{tabular}} & Mult  & $1.21 \!\!\times\!\! 10^2_{\pm0.1 \times 10^1}$ &  $5.85 \!\!\times\!\! 10^1_{\pm0.3 \times 10^0}$ &  $76.2_{\pm0.3}$ & $41.0_{\pm0.5}$ & $39.6_{\pm0.6}$ & $52.9_{\pm0.3}$    \\ \rowcolor[gray]{0.85} %
            & N/A      & $8.55 \!\!\times\!\! 10^2_{\pm2.0 \times 10^1}$ &  $\mathbf{1.25 \!\!\times\!\! 10^2_{\pm0.1 \times 10^1}}$ &  $80.6_{\pm0.2}$ & $\mathbf{42.8_{\pm0.5}}$ & $20.3_{\pm0.5}$ & $49.3_{\pm0.1}$   \\ \rowcolor[gray]{0.85} %
            & Add & $8.56 \!\!\times\!\! 10^2_{\pm2.1 \times 10^1}$ &  $\mathbf{1.25 \!\!\times\!\! 10^2_{\pm0.1 \times 10^1}}$ &  $78.7_{\pm0.3}$ & $41.6_{\pm0.4}$ & $36.0_{\pm0.7}$ & $52.9_{\pm0.3}$  \\ \rowcolor[gray]{0.85} %
    \multirow{-3}{*}{\begin{tabular}{l}WD \\ fixed BN \\ \&ETF\end{tabular}} & Mult  & $8.56 \!\!\times\!\! 10^2_{\pm2.1 \times 10^1}$ &  $\mathbf{1.25 \!\!\times\!\! 10^2_{\pm0.1 \times 10^1}}$ &  $80.5_{\pm0.2}$ & $\mathbf{43.6_{\pm0.3}}$ & $25.6_{\pm0.5}$ & $51.1_{\pm0.1}$  \\
    \bottomrule
    \end{tabular}
\end{table}
\end{center}

\begin{table}
    \centering
    \caption{FDRs and accuracy of models trained with each method for ImageNet-LT. }
    \label{tab:exp5_accfdr6_imagenet}
    \begin{tabular}{llcccccc}
    \toprule
    & & \multicolumn{2}{c}{FDR} &\multicolumn{4}{c}{Accuracy (\%)} \\  \cmidrule(rl){3-4} \cmidrule(l){5-8}
    Method          & LA  & Train & Test & \textit{Many}  & \textit{Medium}     & \textit{Few} & Average  \\ \cmidrule(r){1-2} \cmidrule(rl){3-4}  \cmidrule(l){5-8}
    CE           & N/A   &$1.35 \!\!\times\!\! 10^2_{\pm0.2 \times 10^1}$ &  $1.58 \!\!\times\!\! 10^2_{\pm0.2 \times 10^1}$ &  $54.7_{\pm0.2}$ & $30.1_{\pm0.4}$ & $12.1_{\pm0.2}$ & $37.1_{\pm0.3}$  \\ \rowcolor[gray]{0.85} %
    CB              & N/A   & $1.00 \!\!\times\!\! 10^2_{\pm0.8 \times 10^1}$ &  $1.34 \!\!\times\!\! 10^2_{\pm0.9 \times 10^1}$ &  $47.9_{\pm2.4}$ & $24.1_{\pm2.0}$ & $8.19_{\pm1.1}$ & $31.1_{\pm2.0}$  \\
                & N/A        & $3.79 \!\!\times\!\! 10^2_{\pm0.8 \times 10^1}$ &  $2.99 \!\!\times\!\! 10^2_{\pm0.3 \times 10^1}$ &  $67.4_{\pm0.5}$ & $42.0_{\pm0.4}$ & $15.2_{\pm0.3}$ & $48.1_{\pm0.4}$   \\ 
               & Add  & $3.77 \!\!\times\!\! 10^2_{\pm0.8 \times 10^1}$ &  $2.99 \!\!\times\!\! 10^2_{\pm0.3 \times 10^1}$ &  $62.9_{\pm0.4}$ & $48.8_{\pm0.5}$ & $35.2_{\pm0.3}$ & $52.4_{\pm0.4}$ \\   
    \multirow{-3}{*}{WD}& Mult & $3.77 \!\!\times\!\! 10^2_{\pm0.8 \times 10^1}$ &  $2.99 \!\!\times\!\! 10^2_{\pm0.3 \times 10^1}$ &  $62.8_{\pm0.4}$ & $\mathbf{49.3_{\pm0.6}}$ & $34.4_{\pm0.4}$ & $\mathbf{52.5_{\pm0.4}}$  \\ \rowcolor[gray]{0.85} %
    WB              & N/A & $3.78 \!\!\times\!\! 10^2_{\pm0.8 \times 10^1}$ &  $2.99 \!\!\times\!\! 10^2_{\pm0.3 \times 10^1}$ &  $62.4_{\pm0.5}$ & $\mathbf{50.0_{\pm0.5}}$ & $31.1_{\pm0.5}$ & $52.2_{\pm0.4}$  \\ 
            & N/A        & $6.98 \!\!\times\!\! 10^2_{\pm1.1 \times 10^1}$ &  $4.56 \!\!\times\!\! 10^2_{\pm0.7 \times 10^1}$ &  $\mathbf{68.4_{\pm0.4}}$ & $43.6_{\pm0.4}$ & $17.7_{\pm0.7}$ & $49.6_{\pm0.4}$   \\ 
               & Add  & $6.95 \!\!\times\!\! 10^2_{\pm1.0 \times 10^1}$ &  $4.56 \!\!\times\!\! 10^2_{\pm0.7 \times 10^1}$ &  $64.0_{\pm0.3}$ & $\mathbf{49.6_{\pm0.3}}$ & $\mathbf{35.5_{\pm0.4}}$ & $\mathbf{53.2_{\pm0.3}}$  \\   
    \multirow{-3}{*}{WD\&ETF}& Mult & $6.95 \!\!\times\!\! 10^2_{\pm1.0 \times 10^1}$ &  $4.56 \!\!\times\!\! 10^2_{\pm0.7 \times 10^1}$ &  $63.4_{\pm0.4}$ & $\mathbf{50.0_{\pm0.4}}$ & $35.1_{\pm0.5}$ & $\mathbf{53.2_{\pm0.3}}$   \\ \rowcolor[gray]{0.85} %
             & N/A & $\mathbf{1.03 \!\!\times\!\! 10^3_{\pm0.4 \times 10^2}}$ &  $\mathbf{5.46 \!\!\times\!\! 10^2_{\pm0.6 \times 10^1}}$ &  $\mathbf{68.1_{\pm0.2}}$ & $43.0_{\pm0.2}$ & $17.0_{\pm0.5}$ & $49.1_{\pm0.2}$   \\ \rowcolor[gray]{0.85} %
     & Add & $\mathbf{1.03 \!\!\times\!\! 10^3_{\pm0.4 \times 10^2}}$ &  $\mathbf{5.46 \!\!\times\!\! 10^2_{\pm0.6 \times 10^1}}$ &  $63.3_{\pm0.3}$ & $48.7_{\pm0.1}$ & $\mathbf{36.4_{\pm0.7}}$ & $52.6_{\pm0.2}$   \\ \rowcolor[gray]{0.85} %
    \multirow{-3}{*}{\begin{tabular}{l} WD\&FR \\ \&ETF  \end{tabular}} & Mult & $\mathbf{1.03 \!\!\times\!\! 10^3_{\pm0.4 \times 10^2}}$ &  $\mathbf{5.46 \!\!\times\!\! 10^2_{\pm0.6 \times 10^1}}$ &  $62.4_{\pm0.4}$ & $49.3_{\pm0.2}$ & $\mathbf{36.0_{\pm0.5}}$ & $52.6_{\pm0.2}$   \\
    \bottomrule
    \end{tabular}
\end{table}

\begin{table}
    \centering
    \caption{FDRs and accuracy of ResNeXt50 trained with each method for CIFAR100-LT. }
    \label{tab:exp5_accfdr6_next}
    \begin{tabular}{llcccccc}
    \toprule
    & & \multicolumn{2}{c}{FDR} &\multicolumn{4}{c}{Accuracy (\%)} \\  \cmidrule(rl){3-4} \cmidrule(l){5-8}
    Method          & LA  & Train & Test & \textit{Many}  & \textit{Medium}     & \textit{Few} & Average  \\ \cmidrule(r){1-2} \cmidrule(rl){3-4}  \cmidrule(l){5-8}
    CE           & N/A   & $2.04 \!\!\times\!\! 10^2_{\pm0.6 \times 10^1}$ &  $7.32 \!\!\times\!\! 10^1_{\pm1.1 \times 10^0}$ &  $59.9_{\pm0.4}$ & $32.8_{\pm0.7}$ & $9.51_{\pm0.3}$ & $34.8_{\pm0.4}$ \\ \rowcolor[gray]{0.85} %
    CB              & N/A   & $1.37 \!\!\times\!\! 10^2_{\pm1.6 \times 10^1}$ &  $5.19 \!\!\times\!\! 10^1_{\pm1.1 \times 10^0}$ &  $44.8_{\pm1.4}$ & $20.7_{\pm0.9}$ & $4.62_{\pm0.4}$ & $23.9_{\pm0.9}$  \\
                & N/A        & $8.36 \!\!\times\!\! 10^4_{\pm7.4 \times 10^3}$ &  $\mathbf{2.06 \!\!\times\!\! 10^2_{\pm0.3 \times 10^1}}$ &  $\mathbf{77.9_{\pm0.2}}$ & $48.3_{\pm0.5}$ & $14.9_{\pm0.8}$ & $48.0_{\pm0.3}$  \\ 
               & Add  & $8.41 \!\!\times\!\! 10^4_{\pm7.9 \times 10^3}$ &  $\mathbf{2.06 \!\!\times\!\! 10^2_{\pm0.3 \times 10^1}}$ &  $73.4_{\pm0.5}$ & $48.6_{\pm0.8}$ & $\mathbf{32.9_{\pm0.8}}$ & $52.2_{\pm0.2}$  \\   
    \multirow{-3}{*}{WD}& Mult & $8.41 \!\!\times\!\! 10^4_{\pm7.9 \times 10^3}$ &  $\mathbf{2.06 \!\!\times\!\! 10^2_{\pm0.3 \times 10^1}}$ &  $73.7_{\pm0.6}$ & $49.2_{\pm1.1}$ & $\mathbf{33.8_{\pm1.5}}$ & $52.8_{\pm0.2}$ \\ \rowcolor[gray]{0.85} %
    WB              & N/A & $3.19 \!\!\times\!\! 10^5_{\pm2.0 \times 10^4}$ &  $\mathbf{2.06 \!\!\times\!\! 10^2_{\pm0.2 \times 10^1}}$ &  $\mathbf{77.5_{\pm0.2}}$ & $51.2_{\pm0.9}$ & $21.1_{\pm0.6}$ & $50.8_{\pm0.3}$ \\ 
            & N/A        & $1.67 \!\!\times\!\! 10^5_{\pm2.4 \times 10^4}$ &  $\mathbf{2.02 \!\!\times\!\! 10^2_{\pm0.3 \times 10^1}}$ &  $\mathbf{77.7_{\pm0.6}}$ & $48.8_{\pm0.9}$ & $17.4_{\pm0.5}$ & $48.9_{\pm0.4}$  \\ 
               & Add  & $1.67 \!\!\times\!\! 10^5_{\pm2.4 \times 10^4}$ &  $\mathbf{2.02 \!\!\times\!\! 10^2_{\pm0.3 \times 10^1}}$ &  $72.7_{\pm0.9}$ & $50.3_{\pm1.0}$ & $31.1_{\pm0.5}$ & $52.0_{\pm0.6}$  \\   
    \multirow{-3}{*}{WD\&ETF}& Mult & $1.67 \!\!\times\!\! 10^5_{\pm2.4 \times 10^4}$ &  $\mathbf{2.02 \!\!\times\!\! 10^2_{\pm0.3 \times 10^1}}$ &  $74.6_{\pm0.7}$ & $\mathbf{54.0_{\pm1.0}}$ & $30.9_{\pm0.9}$ & $\mathbf{53.8_{\pm0.4}}$  \\ \rowcolor[gray]{0.85} %
             & N/A & $\mathbf{3.19 \!\!\times\!\! 10^5_{\pm1.8 \times 10^4}}$ &  $\mathbf{2.06 \!\!\times\!\! 10^2_{\pm0.2 \times 10^1}}$ &  $\mathbf{77.4_{\pm0.3}}$ & $49.8_{\pm0.5}$ & $18.2_{\pm0.4}$ & $49.4_{\pm0.3}$  \\ \rowcolor[gray]{0.85} %
     & Add & $\mathbf{3.17 \!\!\times\!\! 10^5_{\pm1.8 \times 10^4}}$ &  $\mathbf{2.06 \!\!\times\!\! 10^2_{\pm0.2 \times 10^1}}$ &  $76.3_{\pm0.3}$ & $51.9_{\pm0.4}$ & $26.2_{\pm0.4}$ & $52.2_{\pm0.2}$  \\ \rowcolor[gray]{0.85} %
    \multirow{-3}{*}{\begin{tabular}{l} WD\&FR \\ \&ETF  \end{tabular}} & Mult & $\mathbf{3.17 \!\!\times\!\! 10^5_{\pm1.8 \times 10^4}}$ &  $\mathbf{2.06 \!\!\times\!\! 10^2_{\pm0.2 \times 10^1}}$ &  $72.9_{\pm0.5}$ & $\mathbf{53.8_{\pm1.1}}$ & $\mathbf{33.4_{\pm0.6}}$ & $\mathbf{54.0_{\pm0.3}}$  \\
    \bottomrule
    \end{tabular}
\end{table}

\newpage
\subsubsection{Experiments on Tabular Data}
We considered analyzing tabular data as an experiment for data with completely different characteristics and features from image data. We used Helena, a dataset for classification with $100$ classes. Since the data is not divided for validation and test, we randomly extracted $20$ samples per class without duplicates. The distribution of the training data is similar to that of long-tailed data with $\rho \simeq 40$. 
We call the class $k$ \textit{Many} if the number of training samples satisfies $500 <N_k$, \textit{Medium} if the number of training samples fullfills $200 \le N_k \le 500$, and \textit{Few} otherwise for this dataset. 

\citet{kadra_well-tuned_2021} show that even an MLP with carefully tuned regularization outperforms state-of-the-art models in the classification of tabular data. Following them, we used a 9-layer MLP with $512$ dimensions per layer for the feature extractor. The model is trained by AdamW \citep{loshchilov_decoupled_2018} for $400$ epochs in the first stage and $10$ epochs in the second stage. Thus, WD is not implemented in L2 regularization in this dataset, but is built into the optimizer. By a hyperparameter search with the validation data, we set the hyperparameters as follows. We used dropout \citep{srivastava_dropout_2014} and set the hyperparameter to $0.15$. The initial learning rate was $0.001$ for the first stage and $0.0004$ for the second stage. We set $\lambda$ of WD to $0.15$ for the first stage and $0.0003$ for the second stage. We set $\zeta$ of FR to $0.001$.

Table \ref{tab:helena} presents the results. As in the experiment with image data, the proposed method outperforms the existing methods in both test FDR and average accuracy. Note that we used AdamW for the optimizer instead of SGD and that the proposed method also succeeds in this case. This result indicates that the proposed method works for optimizers other than SGD.

\begin{table}
    \centering
    \caption{FDRs and accuracy of MLP trained with each method for Helena. }
    \label{tab:helena}
    \begin{tabular}{llcccccc}
    \toprule
    & & \multicolumn{2}{c}{FDR} &\multicolumn{4}{c}{Accuracy (\%)} \\  \cmidrule(rl){3-4} \cmidrule(l){5-8}
    Method          & LA  & Train & Test & \textit{Many}  & \textit{Medium}     & \textit{Few} & Average  \\ \cmidrule(r){1-2} \cmidrule(rl){3-4}  \cmidrule(l){5-8}
    CE           & N/A   & $5.40 \!\!\times\!\! 10^1_{\pm5.7 \times 10^0}$ &  $6.48 \!\!\times\!\! 10^1_{\pm1.1 \times 10^0}$ &  $34.7_{\pm0.8}$ & $20.8_{\pm1.1}$ & $17.2_{\pm0.5}$ & $24.3_{\pm0.5}$   \\ \rowcolor[gray]{0.85} %
    CB              & N/A   & $4.31 \!\!\times\!\! 10^1_{\pm7.1 \times 10^0}$ &  $6.53 \!\!\times\!\! 10^1_{\pm0.7 \times 10^0}$ &  $26.4_{\pm0.8}$ & $27.5_{\pm1.0}$ & $\mathbf{25.1_{\pm1.8}}$ & $26.3_{\pm0.8}$   \\
                & N/A        & $\mathbf{7.61 \!\!\times\!\! 10^1_{\pm1.0 \times 10^0}}$ &  $6.53 \!\!\times\!\! 10^1_{\pm0.6 \times 10^0}$ &  $\mathbf{36.3_{\pm0.6}}$ & $20.8_{\pm0.4}$ & $16.6_{\pm1.1}$ & $24.7_{\pm0.6}$   \\ 
               & Add  & $\mathbf{7.61 \!\!\times\!\! 10^1_{\pm1.0 \times 10^0}}$ &  $6.53 \!\!\times\!\! 10^1_{\pm0.6 \times 10^0}$ &  $33.4_{\pm0.4}$ & $26.3_{\pm0.6}$ & $24.5_{\pm0.8}$ & $28.1_{\pm0.2}$  \\   
    \multirow{-3}{*}{WD}& Mult & $\mathbf{7.61 \!\!\times\!\! 10^1_{\pm1.0 \times 10^0}}$ &  $6.53 \!\!\times\!\! 10^1_{\pm0.6 \times 10^0}$ &  $31.6_{\pm0.3}$ & $27.1_{\pm0.5}$ & $21.9_{\pm1.0}$ & $26.9_{\pm0.3}$ \\ \rowcolor[gray]{0.85} %
    WB              & N/A & $\mathbf{7.63 \!\!\times\!\! 10^1_{\pm0.8 \times 10^0}}$ &  $6.55 \!\!\times\!\! 10^1_{\pm0.7 \times 10^0}$ &  $35.0_{\pm0.5}$ & $26.0_{\pm0.8}$ & $22.7_{\pm1.4}$ & $28.0_{\pm0.5}$  \\ 
            & N/A        & $\mathbf{7.73 \!\!\times\!\! 10^1_{\pm7.2 \times 10^0}}$ &  $6.86 \!\!\times\!\! 10^1_{\pm0.3 \times 10^0}$ &  $\mathbf{36.2_{\pm0.6}}$ & $22.6_{\pm1.1}$ & $17.7_{\pm0.3}$ & $25.6_{\pm0.4}$  \\ 
               & Add  & $\mathbf{7.73 \!\!\times\!\! 10^1_{\pm7.2 \times 10^0}}$ &  $6.86 \!\!\times\!\! 10^1_{\pm0.3 \times 10^0}$ &  $32.6_{\pm0.9}$ & $28.8_{\pm0.6}$ & $25.5_{\pm0.9}$ & $\mathbf{29.0_{\pm0.4}}$  \\   
    \multirow{-3}{*}{WD\&ETF}& Mult & $\mathbf{7.73 \!\!\times\!\! 10^1_{\pm7.2 \times 10^0}}$ &  $6.86 \!\!\times\!\! 10^1_{\pm0.3 \times 10^0}$ &  $32.1_{\pm1.0}$ & $\mathbf{30.1_{\pm0.6}}$ & $25.4_{\pm0.9}$ & $\mathbf{29.2_{\pm0.3}}$ \\ \rowcolor[gray]{0.85} %
             & N/A & $\mathbf{8.49 \!\!\times\!\! 10^1_{\pm14.9 \times 10^0}}$ &  $\mathbf{7.04 \!\!\times\!\! 10^1_{\pm0.7 \times 10^0}}$ &  $\mathbf{36.1_{\pm1.3}}$ & $21.6_{\pm0.6}$ & $17.4_{\pm1.0}$ & $25.2_{\pm0.3}$  \\ \rowcolor[gray]{0.85} %
     & Add & $\mathbf{8.49 \!\!\times\!\! 10^1_{\pm14.9 \times 10^0}}$ &  $\mathbf{7.04 \!\!\times\!\! 10^1_{\pm0.7 \times 10^0}}$ &  $31.3_{\pm1.0}$ & $28.7_{\pm0.6}$ & $\mathbf{27.5_{\pm0.6}}$ & $\mathbf{29.2_{\pm0.4}}$  \\ \rowcolor[gray]{0.85} %
    \multirow{-3}{*}{\begin{tabular}{l} WD\&FR \\ \&ETF  \end{tabular}} & Mult & $\mathbf{8.49 \!\!\times\!\! 10^1_{\pm14.9 \times 10^0}}$ &  $\mathbf{7.04 \!\!\times\!\! 10^1_{\pm0.7 \times 10^0}}$ &  $31.9_{\pm1.1}$ & $\mathbf{29.6_{\pm0.9}}$ & $25.6_{\pm0.5}$ & $\mathbf{29.1_{\pm0.5}}$   \\
    \bottomrule
    \end{tabular}
\end{table}

\subsection{Broader impacts}
Our research provides theoretical and experimental evidence for the effectiveness of an existing ad-hoc method. We also show that the original method can be simplified based on this theory. The theory is valid for general DNNs and does not concern any social issues such as privacy. On the contrary, it reduces the number of training stages to one while maintaining a higher level of accuracy, thus reducing the computational cost and the negative impact on the environment.

\end{document}